\newcommand{\Coupled}{\textsc{Coupled}\xspace}
\newcommand{\CoupledNat}{\textsc{CoupledNat}\xspace}
\newcommand{\OrthDecoupled}{\textsc{Orth}\xspace}
\newcommand{\OrthDecoupledNat}{\textsc{OrthNat}\xspace}
\newcommand{\Hybrid}{\textsc{Hybrid}\xspace}
\newcommand{\Decoupled}{\textsc{Decoupled}\xspace}
\newcommand{\suppmat}{Appendix\xspace}  %
\newcommand{\dataset}[1]{\texttt{#1}}  %
\theoremstyle{plain}
\newtheorem{lemma}{Lemma}[section]
\newtheorem{proposition}{Proposition}[section]
\theoremstyle{definition}
\theoremstyle{remark}
\def\DD{\mathcal{D}}\def\EE{\mathcal{E}}
\def\GG{\mathcal{G}}\def\HH{\mathcal{H}}
\def\LL{\mathcal{L}}
\def\NN{\mathcal{N}}
\def\PP{\mathcal{P}}
\def\XX{\mathcal{X}}
\def\Ab{\mathbf{A}}\def\Bb{\mathbf{B}}
\def\Db{\mathbf{D}}
\def\Ib{\mathbf{I}}
\def\Kb{\mathbf{K}}\def\Lb{\mathbf{L}}
\def\Sb{\mathbf{S}}
\def\ab{\mathbf{a}}
\def\fb{\mathbf{f}}
\def\jb{\mathbf{j}}\def\kb{\mathbf{k}}
\def\mb{\mathbf{m}}
\def\yb{\mathbf{y}}
\def\Cbb{\mathbb{C}}
\def\Ebb{\mathbb{E}}
\def\Rbb{\mathbb{R}}
\def\R{\Rbb}\def\C{\Cbb}
\def\GP{\GG\PP}
\def\der{\mathrm{d}}
\def\diag{\mathrm{diag}}
\def\t{\top}
\newcommand{\abs}[1]{ \left| #1 \right|  }
\newcommand{\tr}[1]{ \mathrm{tr}\left( #1\right)}
\newcommand{\lr}[2]{ \left\langle #1, #2 \right\rangle}
\DeclareMathOperator*{\argmin}{arg\,min}
\newcommand{\KL}[2]{D_{KL}(#1, #2)} %
\newcommand{\E}{\Ebb}
\providecommand{\@fourthoffour}[4]{#4}
\newcommand\fixstatement[2][\proofname\space of]{%
	\ifcsname thmt@original@#2\endcsname
	\AtEndEnvironment{#2}{%
		\xdef\pat@label{\expandafter\expandafter\expandafter
			\@fourthoffour\csname thmt@original@#2\endcsname\space\@currentlabel}%
		\xdef\pat@proofof{\@nameuse{pat@proofof@#2}}%
	}%
	\else
	\AtEndEnvironment{#2}{%
		\xdef\pat@label{\expandafter\expandafter\expandafter
			\@fourthoffour\csname #1\endcsname\space\@currentlabel}%
		\xdef\pat@proofof{\@nameuse{pat@proofof@#2}}%
	}%
	\fi
	\@namedef{pat@proofof@#2}{#1}%
}
\newcounter{proofcount}
	\edef\next{%
		\noexpand\begin{proof}[\pat@proofof\space\pat@label]%
			\unexpanded\expandafter{\BODY}}%
\def\printproofs{%
	\count@=\z@
	\loop
	\the\toks\numexpr\prooftoks+\count@\relax
	\ifnum\count@<\value{proofcount}%
	\advance\count@\@ne
	\repeat}
\newcommand{\mbf}[1]{\mathbf{#1}}
\renewcommand{\vec}[1]{\mathrm{vec}(#1)}
\renewcommand{\KL}[2]{\mathrm{KL}(#1||#2)}
\title{Orthogonally Decoupled Variational\\ Gaussian Processes}
\author{
  Hugh Salimbeni\footnote[1]{Equal contribution.}\\
  Imperial College London\\
  \texttt{hrs13@ic.ac.uk}
  \And
  Ching-An Cheng\footnotemark[1]\\
  Georgia Institute of Technology\\
  \texttt{cacheng@gatech.edu}
    \AND
  Byron Boots\\
  Georgia Institute of Technology\\
  \texttt{bboots@gatech.edu}
    \And
  Marc Deisenroth\\
  Imperial College London\\
  \texttt{mpd37@ic.ac.uk}
}
\begin{document}
\maketitle

\footnotetext[1]{Equal contribution}

\begin{abstract}
Gaussian processes (GPs) provide a powerful non-parametric framework for reasoning over functions. Despite appealing theory, its superlinear computational and memory complexities have presented a long-standing challenge. State-of-the-art sparse variational inference methods trade modeling accuracy against complexity. However, the complexities of these methods still  scale superlinearly in the number of basis functions, implying that that sparse GP methods are able to learn from large datasets only when a small model is used. Recently, a decoupled approach was proposed that removes the unnecessary coupling between the complexities of modeling the mean and the covariance functions of a GP. It achieves a linear complexity in the number of mean parameters, so an expressive posterior mean function can be modeled. While promising, this approach suffers from optimization difficulties due to ill-conditioning and non-convexity. In this work, we propose an alternative decoupled parametrization. It adopts an orthogonal basis in the mean function to model the residues that cannot be learned by the standard coupled approach. Therefore, our method extends, rather than replaces, the coupled approach to achieve strictly better performance. This construction admits a straightforward natural gradient update rule, so the structure of the information manifold that is lost during decoupling can be leveraged to speed up learning. Empirically, our algorithm demonstrates significantly faster convergence in multiple experiments.
\end{abstract}

\section{Introduction}
Gaussian processes (GPs) are flexible Bayesian non-parametric models that have achieved state-of-the-art performance in a range of applications~\citep{diggle2007springer,Snoek2012}. A key advantage of GP models is that they have large representational capacity, while being robust to overfitting~\citep{Rasmussen2006}. %
This property is especially important for robotic applications, where there may be an abundance of data in some parts of the space but a scarcity in others~\citep{deisenroth2011pilco}. Unfortunately, exact inference in GPs scales cubically in computation and quadratically in memory with the size of the training set, and is only available in closed form for Gaussian likelihoods.

To learn from large datasets, variational inference provides a principled way to find tractable approximations to the true posterior. A common approach to approximate GP inference is to form a sparse variational posterior, which is designed by conditioning the prior process at a small set of \emph{inducing points}~\citep{titsias2009variational}. The sparse variational framework  trades accuracy against computation, but its complexities still scale superlinearly in the number of inducing points. %
 Consequently, the representation power of the approximate distribution is greatly limited.

Various attempts have been made to reduce the complexities in order to scale up GP models for better approximation.
 Most of them, however, rely on certain assumptions on the kernel structure and input dimension. In the extreme, \citet{hartikainen2010kalman} show that, for 1D-input problems, exact GP inference can be solved in linear time for kernels with finitely many non-zero derivatives. 
For low-dimensional inputs and stationary kernels, variational inference with 
structured kernel approximation~\citep{wilson2015kernel} or Fourier features~\citep{hensman2016variational} has been proposed.
Both approaches, nevertheless, scale exponentially with input dimension,
except for the special case of sum-and-product kernels~\citep{gardner2018product}. 
Approximate kernels have also been proposed as GP priors with low-rank structure~\citep{snelson2006sparse, quinonero2005unifying} or a sparse spectrum~\citep{quia2010sparse}. 
Another family of methods partitions the input space into subsets and performs prediction aggregation~\citep{tresp2000bayesian, rasmussen2002infinite, nguyen2014fast,deisenroth2015distributed}, and
Bayesian aggregation of local experts with attractive theoretical properties is recently proposed by~\citet{rulliere2018nested}.

A recent decoupled framework~\citep{cheng2017variational} takes a different direction to address the complexity issue of GP inference. 
In contrast to the above approaches, this decoupled framework is agnostic to problem setups (e.g. likelihoods, kernels, and input dimensions) and extends the original sparse variational formulation~\citep{titsias2009variational}. 
The key idea is to represent the variational distribution in the reproducing kernel Hilbert space (RKHS) induced by the covariance function of the GP. The sparse variational posterior by~\citet{titsias2009variational} turns out to be equivalent to a particular parameterization in the RKHS, where the mean and covariance both share the same basis. 
\citet{cheng2017variational} suggest to relax the requirement of basis sharing. Since the computation only scales linearly in the mean parameters, many more basis functions can be used for modeling the mean function to achieve higher accuracy in prediction. 

However, the original decoupled basis~\citep{cheng2017variational} turns out to have optimization difficulties~\citep{havasi2018deep}. 
In particular, the non-convexity of the optimization problem means that a suboptimal solution may be found, leading to performance that is potentially worse than the standard coupled case. 
While \citet{havasi2018deep} suggest to use a pre-conditioner to amortize the problem, their algorithm incurs an additional cubic computational cost; therefore, its applicability is limited to small simple models.%

Inspired by the success of natural gradients in variational inference~\citep{hoffman2013stochastic, salimbeni2018natural}, we propose a novel RKHS parameterization of decoupled GPs that admits efficient natural gradient computation. We decompose the mean parametrization into a part that shares the basis with the covariance, and an orthogonal part that models the residues that the standard coupled approach fails to capture. We show that, with this particular choice, the natural gradient update rules further \emph{decouple} into the natural gradient descent of the coupled part and the functional gradient descent of the residual part. Based on these insights, we propose an efficient optimization algorithm that preserves the desired properties of decoupled GPs and  converges faster than the original formulation~\citep{cheng2017variational}.

We demonstrate that our basis is more effective than the original decoupled formulation on a range of classification and regression tasks. We show that the natural gradient updates improve convergence considerably and can lead to much better performance in practice. Crucially, we show also that our basis is more effective than the standard coupled basis for a fixed computational budget.

\section{Background}

We consider the inference problem of GP models. Given a dataset $\DD = \{(x_n, y_n) \}_{n=1}^N$ and a GP prior on a latent function $f$, the goal is to infer the (approximate) posterior of $f(x^*)$ for any query input $x^*$. In this work, we adopt the recent decoupled RKHS reformulation of variational inference~\citep{cheng2017variational}, and,  without loss of generality, we will assume $f$ is a scalar function.
For notation, we use boldface to distinguish finite-dimensional vectors and matrices that are used in computation from scalar and abstract mathematical objects.

\subsection{Gaussian Processes and their RKHS Representation} \label{sec:GP and RKHS}

We first review the primal and dual representations of GPs, which form the foundation of the RKHS reformulation.  
Let $\XX \subseteq \R^d$ be the domain of the latent function. A GP is a distribution of functions, which is described by a mean function $m: \XX \to \R$ and a covariance function $k: \XX \times \XX \to \R$. 
We say a latent function $f$ is distributed according to  $\GP(m, k)$, if for any $x, x' \in \XX$, 
$\E[f(x)] = m(x)$,  $\C[f(x), f(x')] = k(x,x')$, and  for any finite subset $\{ f(x_n) : x_n \in \XX \}_{n=1}^N$ is Gaussian distributed. %

We call the above definition, in terms of the \emph{function values} $m(x)$ and $k(x,x')$, the \emph{primal} representation of GPs. Alternatively, one can  adopt a \emph{dual} representation of GPs, by treating \emph{functions} $m$ and $k$ as RKHS objects~\citep{cheng2016incremental}. 
This is based on observing that the covariance function $k$ satisfies the definition of positive semi-definite functions, %
so $k$ can also be viewed as a reproducing kernel \citep{aronszajn1950theory}.  
Specifically, given $\GP(m, k)$, without loss of generality, we can find an RKHS $\HH$ such that 
\begin{align}\label{eq:RKHS form}
	m(x) = \phi(x)^\t \mu, \qquad k(x,x') = \phi(x)^\t \Sigma \phi(x')
\end{align}
for some  $\mu  \in \HH$, bounded positive semidefinite self-adjoint operator $\Sigma: \HH \to \HH$, and feature map $\phi: \XX \to \HH$. Here we use $^\t$ to denote the inner product in $\HH$, even when $\dim{\HH}$ is infinite. 
For notational clarity, we use symbols $m$ and $k$ (or $s$) to denote the mean and covariance functions, and symbols $\mu$ and $\Sigma$ to denote the RKHS objects; we use $s$ to distinguish the (approximate) posterior covariance function from the prior covariance function $k$. 
If $ f \sim \GP(m, k)$ satisfying~\eqref{eq:RKHS form}, we also write $f \sim \GP_\HH( \mu, \Sigma)$.\footnote{ 
	This notation only denotes that $m$ and $k$ can be represented as RKHS objects, not that the sampled functions of $\GP(m, k)$ necessarily reside in $\HH$ (which only holds for the special when $\Sigma$ has finite trace).} 

To concretely illustrate the primal-dual connection, we consider the GP regression problem. Suppose $f \sim \GP(0,k)$ in prior and $y_n = f(x_n) + \epsilon_n$, where $\epsilon_n \sim \NN(\epsilon_n | 0, \sigma^2)$.
Let $X = \{x_n\}_{n=1}^N$ and $\yb = (y_n)_{n=1}^N \in \R^N$, where the notation $(\cdot)_{n = \cdot}^\cdot$ denotes stacking the elements. 
Then, with $\DD$  observed, it can be shown that $f \sim \GP(m, s)$ where 
\begin{align} \label{eq:GPR}
m(x) = \kb_{x,X} (\Kb_X + \sigma^2 \Ib)^{-1} \yb, \qquad
s(x,x') = k_{x,x'} -  \kb_{x,X} (\Kb_X + \sigma^2 \Ib)^{-1} \kb_{X,x'},
\end{align}
where $k_{\cdot,\cdot}$, $\mbf{k_{\cdot,\cdot}}$ and $\mbf{K}_{\cdot,\cdot}$ denote the covariances between the subscripted sets,\footnote{If the two sets are the same, only one is listed.}
We can also equivalently write the posterior GP in~\eqref{eq:GPR} in its dual RKHS representation: suppose the feature map $\phi$ is selected such that $k(x,x') = \phi(x)^\t \phi(x')$, then \textit{a priori} $f \sim  \GP_\HH( 0, I)$ and \textit{a posteriori} %
$f \sim \GP_\HH( \mu, \Sigma)$, %
\begin{align} \label{eq:GPR in RKHS}
\mu = \Phi_{X} (\Kb_X + \sigma^2 \Ib)^{-1} \yb, \qquad
\Sigma  = I - \Phi_{X} (\Kb_X + \sigma^2 \Ib)^{-1} \Phi_{X}^\t,
\end{align}
where $\Phi_{X} = [\phi(x_1), \dots, \phi(x_N)]$.

\subsection{Variational Inference Problem}

Inference in GP models is challenging because the closed-form expressions in~\eqref{eq:GPR} have computational complexity that is cubic in the size of the training dataset, and are only applicable for Gaussian likelihoods.
For non-Gaussian likelihoods (e.g. classification) or for large datasets (i.e. more than 10,000 data points), we must adopt approximate inference.

Variational inference provides a principled approach to search for an approximate but tractable posterior. 
It seeks a variational posterior $q$  that is close to the true posterior $p(f | \DD)$ in terms of KL divergence, i.e. it solves $\min_{q}  \KL{q(f)}{p(f | \DD)}$. 
For GP models, the variational posterior must be defined over the entire function, so a natural choice is to use another GP. This choice is also motivated by the fact that the exact posterior is a GP in the case of a Gaussian likelihood as shown in~\eqref{eq:GPR}. 
Using the results from Section~\ref{sec:GP and RKHS}, we can represent this posterior process via a mean and a covariance function or, equivalently, through their associated RKHS objects. 

We denote these RKHS objects as $\mu$ and $\Sigma$, which uniquely determine the GP posterior $ \GP_\HH( \mu, \Sigma)$. In the following, without loss of generality, we shall assume  that the prior GP is zero-mean and the RKHS is selected such that $f \sim \GP_\HH( 0, I)$ \textit{a priori}.

The variational inference problem in GP models leads to the optimization problem 
\begin{align} \label{eq:VI problem}
\min_{q = \GP_\HH( \mu, \Sigma) } \LL (q)\,,
\qquad 
\LL(q) =  - \sum\nolimits_{n=1}^{N} \E_{q(f(x_n))}[ \log p(y_n| f(x_n))] + \KL{q(f)}{p(f)}\,,
\end{align}
where $p(f) = \GP_\HH( 0, I)$ and  $\KL{q(f)}{p(f)}= \int  \log \frac{q(f)}{p(f)} \der q(f)$ is the KL divergence between the approximate posterior GP $q(f)$ and the prior GP $p(f)$. It can be shown that $\LL(q) = \KL{q(f)}{p(f|\DD)}$ up to an additive constant~\citep{cheng2017variational}.

\subsection{Decoupled Gaussian Processes}

Directly optimizing the possibly infinite-dimensional RKHS objects $\mu$ and $\Sigma$ is computationally intractable except for the special case of a Gaussian likelihood and a small training set size $N$. Therefore, in practice, we need to impose a certain sparse structure on $\mu$ and $\Sigma$.
Inspired by the functional form of the exact solution in~\eqref{eq:GPR in RKHS}, \citet{cheng2017variational} propose to model the approximate posterior GP in the \emph{decoupled subspace parametrization} (which we will refer to as \emph{decoupled basis} for short) with
\begin{align} \label{eq:general subspace parametrization}
\mu  = \Psi_\alpha \ab, \qquad 
\Sigma = I + \Psi_\beta \Ab \Psi_\beta^\t
\end{align}
where $\alpha$ and $\beta$ are the sets of \emph{inducing points} to specify the bases $\Psi_\alpha$ and $\Psi_\beta$ in the RKHS, and $\ab \in \R^{|\alpha|}$ and $\Ab \in \R^{|\beta| \times |\beta|}$ are the coefficients such that $\Sigma \succeq 0$. With only finite perturbations from the prior, the construction in~\eqref{eq:general subspace parametrization} ensures the KL divergence $\KL{q(f)}{p(f)}$ is finite~\citep{matthews2016sparse,cheng2017variational} (see Appendix~\ref{app:variational inference}). 
Importantly, this parameterization decouples the variational parameters $(\ab, \alpha)$ for the mean $\mu$ and the variational parameters $(\Ab, \beta)$ for the covariance $\Sigma$. As a result, the computation complexities related to the two parts become independent, and a \emph{large} set of parameters can adopted for the mean to model complicated functions, as discussed below.

\paragraph{Coupled Basis}
The form in~\eqref{eq:general subspace parametrization} covers the  sparse variational posterior~\citep{titsias2009variational}. 
Let $Z=\{z_n \in\XX \}_{n=1}^{M}$ be some fictitious inputs and let 
$\fb_{Z} = (f(z_n))_{n=1}^M$ be the vector of function values. 
Based on the primal viewpoint of GPs, \citet{titsias2009variational} constructs the variational posterior as the posterior GP conditioned on $Z$ with  marginal   $q(\fb_{Z})=\mathcal N(\fb_{Z} | \mb, \Sb)$, 
where $\mb \in \R^M$ and $\Sb \succeq 0 \in \R^{M \times M}$. 
The elements in $Z$  along with $\mb$ and $\Sb$ are the variational parameters to optimize.
The mean and covariance functions of this process $\GP(m,s)$ are %
\begin{align}
m(x) =  \kb_{x, Z} \Kb^{-1}_{Z} \mb, \qquad 
s(x,x') = k_{x, x'} + \kb_{x, Z} \mbf{K}_{Z}^{-1} (  \Sb - \mbf{K}_{Z}) \mbf{K}_{Z}^{-1}  \kb_{Z, x} \,,
\end{align}
which is reminiscent of the exact result in~\eqref{eq:GPR}.
Equivalently, it has the dual representation 
\begin{align} \label{eq:coupled RKHS}
\mu =   \Psi_{Z} \Kb^{-1}_{Z}\mb,
\qquad 
\Sigma = I + \Psi_{Z} \mbf{K}_{Z}^{-1} ( \Sb - \mbf{K}_{Z}) \mbf{K}_{Z}^{-1}  \Psi_{Z}^\t ,
\end{align}
which conforms with the form in~\eqref{eq:general subspace parametrization}. The computational complexity of using the coupled basis reduces from $O(N^3)$ to $O(M^3 + M^2 N )$. Therefore, when $M \ll N$ is selected, the GP can be applied to learning from large datasets~\citep{titsias2009variational}.

\paragraph{Inversely Parametrized Decoupled Basis}%
Directly parameterizing the dual representation in~\eqref{eq:general subspace parametrization} admits more flexibility than the primal function-valued perspective.
To ensure that the covariance of the posterior strictly decreases compared with the prior, \citet{cheng2017variational} propose a decoupled basis with an inversely parametrized covariance operator
\begin{align} \label{eq:decoupled RKHS (cheng 2017)}
\mu = \Psi_\alpha \ab, \qquad 
\Sigma = (I + \Psi_\beta \Bb \Psi_\beta^\t)^{-1},
\end{align}
where $\Bb \succeq 0 \in \R^{|\beta| \times |\beta|}$ and is further parametrized by its Cholesky factors in implementation. It can be shown that the choice in~\eqref{eq:decoupled RKHS (cheng 2017)} is equivalent to setting $\Ab = - \Kb_\beta^{-1} +  (\Kb_\beta  \Bb \Kb_\beta + \Kb_\beta )^{-1}$ 
in~\eqref{eq:general subspace parametrization}.
In this parameterization, because the bases for the mean and the covariance are decoupled, the computational complexity of solving~\eqref{eq:VI problem} with the decoupled basis in~\eqref{eq:decoupled RKHS (cheng 2017)}
becomes $O(|\alpha| + |\beta|^3)$, as opposed to $O(M^3)$ of~\eqref{eq:coupled RKHS}. 
Therefore, while it is usually assumed that $|\beta| $ is in the order of $M$, with a decoupled basis, we can freely choose  $|\alpha| \gg |\beta|$ for modeling complex mean functions accurately.

\section{Orthogonally Decoupled Variational Gaussian Processes }

While the particular decoupled basis in~\eqref{eq:decoupled RKHS (cheng 2017)} is more expressive, its optimization problem is ill-conditioned and non-convex, and empirically slow convergence has been observed~\citep{havasi2018deep}. 
To improve the speed of learning decoupled models, %
we consider the use of natural gradient descent~\citep{amari1998natural}. In particular, we are interested in the update rule for \emph{natural parameters}, which has empirically demonstrated impressive convergence performance over other choices of parametrizations~\citep{salimbeni2018natural}

However, it is unclear what the natural parameters~\eqref{eq:general subspace parametrization} for the general decoupled basis in~\eqref{eq:general subspace parametrization} are and whether finite-dimensional natural parameters even exist for such a model. In this paper, we show that when a decoupled basis is appropriately structured, then natural parameters do exist. Moreover, they admit a very efficient (approximate) natural gradient update rule as detailed in Section~\ref{sec:NGD}.
As a result, large-scale decoupled models can be quickly learned, joining the fast convergence property from the coupled approach~\citep{hensman2013gaussian} and the flexibility of the decoupled approach~\citep{cheng2017variational}.

\subsection{Alternative Decoupled Bases}
To motivate the proposed approach, let us first introduce some alternative decoupled bases for improving optimization properties~\eqref{eq:decoupled RKHS (cheng 2017)} and discuss their limitations. The inversely parameterized decoupled basis~\eqref{eq:decoupled RKHS (cheng 2017)} is likely to have different optimization properties from the standard coupled basis~\eqref{eq:coupled RKHS}, due to the inversion in its covariance parameterization. To avoid these potential difficulties, we reparameterize the covariance of~\eqref{eq:decoupled RKHS (cheng 2017)} as the one in~\eqref{eq:coupled RKHS} and consider instead the basis
\begin{align} \label{eq:decoupled_aS}
\mu = \Psi_\alpha  \ab, 
\qquad
\Sigma = (I - \Psi_\beta \Kb^{-1}_\beta \Psi_\beta^\top) + 
\Psi_\beta \Kb^{-1}_{\beta} \Sb\Kb^{-1}_{\beta}\Psi_\beta ^\top\,.
\end{align}
The basis \eqref{eq:decoupled_aS} can be viewed as a decoupled version of \eqref{eq:coupled RKHS}: it can be readily identified that setting $\alpha=\beta=Z$ and  $\ab=\Kb_Z^{-1}\mb$ recovers \eqref{eq:coupled RKHS}. Note that we do not want to define a basis in terms of $\Kb_\alpha^{-1}$ as that incurs the cubic complexity that we intend to avoid.  This basis gives a posterior process with
\begin{align} \label{eq:decoupled_1_posterior}
m(x) = \kb_{x,\alpha} \ab, \qquad
s(x, x') = k_{x,x'} - \kb_{x,\beta} \Kb^{-1}_\beta (\Sb -  \Kb_\beta)\Kb^{-1}_\beta\kb_{\beta, x'}.
\end{align}

The alternate choice~\eqref{eq:decoupled_aS} addresses the difficulty in optimizing the covariance operator, but it still suffers from one serious drawback: while using more inducing points, \eqref{eq:decoupled_aS} is not necessarily more expressive than the standard basis~\eqref{eq:coupled RKHS}, for example, when $\alpha$ is selected badly. To eliminate the worst-case setup, we can explicitly consider $\beta$ to be part of $\alpha$ and use
\begin{align}\label{eq:decoupled_gamma_beta_S}
\mu = \Psi_\gamma \ab_{\gamma} + \Psi_\beta \Kb_\beta^{-1}\mb_{\beta}, 
\qquad
\Sigma = (I - \Psi_\beta \Kb^{-1}_\beta \Psi_\beta^\top) + 
\Psi_\beta \Kb^{-1}_{\beta} \Sb\Kb^{-1}_{\beta}\Psi_\beta ^\top.
\end{align}
where $\gamma = \alpha \setminus \beta$.
This is exactly the \emph{hybrid basis} suggested in the appendix of \citet{cheng2017variational}, which is strictly more expressive than~\eqref{eq:coupled RKHS} and yet has the complexity as~\eqref{eq:decoupled RKHS (cheng 2017)}. Also the explicit inclusion of $\beta$ inside $\alpha$ is pivotal to defining proper finite-dimensional natural parameters, which we will later discuss.   This basis gives a posterior process with the same covariance as \eqref{eq:decoupled_1_posterior}, and mean $m(x) = \kb_{x,\gamma} \ab_\gamma  +  \kb_{x,\beta}\Kb_{\beta}^{-1}\mb_\beta $.

\subsection{Orthogonally Decoupled Representation} \label{sec:orthogonal}
But is~\eqref{eq:decoupled_gamma_beta_S} the best possible decoupled basis? Upon closer inspection, we find that there is redundancy in the parameterization of this basis: as $\Psi_\gamma$ is not orthogonal to $\Psi_\beta$ in general, optimizing $\ab_{\gamma}$ and $\mb_{\beta}$ jointly would create coupling and make the optimization landscape more ill-conditioned.

To address this issue, under the partition that $\alpha = \{ \beta, \gamma\}$, we propose a new decoupled basis as
\begin{align} \label{eq:orthogonal RKHS}
\mu = (I - \Psi_\beta \Kb^{-1}_\beta \Psi_\beta^\top)\Psi_\gamma \ab_{\gamma} + \Psi_\beta \ab_{\beta}, 
\qquad
\Sigma = (I - \Psi_\beta \Kb^{-1}_\beta \Psi_\beta^\top) + 
\Psi_\beta \Kb^{-1}_{\beta} \Sb\Kb^{-1}_{\beta}\Psi_\beta ^\top,
\end{align}
where $\ab_\gamma \in \R^{|\gamma|}$, $\ab_{\beta} \in \R^{|\beta|}$ and $\Sb =\Lb\Lb^{\top}$ is  parametrized by its Cholesky factor.
We call $(\ab_\gamma, \ab_\beta, \Sb)$ the \emph{model parameters} 
and 
refer to \eqref{eq:orthogonal RKHS} as the \emph{orthogonally} decoupled basis, because $(I - \Psi_\beta \Kb^{-1}_\beta \Psi_\beta^\top)$ is orthogonal to $\Psi_\beta$ (i.e. $(I - \Psi_\beta \Kb^{-1}_\beta \Psi_\beta^\top)^\top\Psi_\beta=0$). 
By substituting  $Z = \beta$ and $\ab_\beta = \Kb_Z^{-1} \mb$, we can compare~\eqref{eq:orthogonal RKHS} to~\eqref{eq:coupled RKHS}: \eqref{eq:orthogonal RKHS} has an additional part parameterized by $\ab_\gamma$ to model the mean function residues that \emph{cannot} be captured by using the inducing points $\beta$ alone. In prediction, our basis has time complexity in $O(|\gamma| + |\beta|^3)$ because $\Kb_{\beta}^{-1} \Kb_{\beta, \gamma}  \ab_{\gamma}$ can be precomputed.  The orthogonally decoupled basis results in a posterior process with
\begin{align*}%
m(x) = (\kb_{x,\gamma} - \kb_{x, \beta}  \Kb_{\beta, \gamma} ) \ab_{\gamma} + \kb_{x, \beta} \ab_{\beta}, \quad 
s(x, x') = k_{x,x'} - \kb_{x,\beta} \Kb^{-1}_{\beta} (\Sb-\Kb^{-1}_{\beta})\Kb^{-1}_{\beta} \kb_{\beta, x'}.
\end{align*}
This decoupled basis can also be derived from the perspective of \citet{titsias2009variational} by conditioning the prior on a finite set of inducing points\footnote{We thank an anonymous reviewer for highlighting this connection.}. Details of this construction are in Appendix \ref{app:inducing_point_construction}.

Compared with the original decoupled basis in~\eqref{eq:decoupled RKHS (cheng 2017)}, our choice in~\eqref{eq:orthogonal RKHS} has attractive properties:
\begin{enumerate}%
\item The explicit inclusion of $\beta$ as a subset of $\alpha$ leads to the existence of natural parameters. 
\item If the likelihood is strictly log-concave (e.g. Gaussian and Bernoulli likelihoods), then the variational inference problem in~\eqref{eq:VI problem} is strictly convex  in $(\ab_\gamma, \ab_\beta, \Lb)$ (see Appendix~\ref{app:convexity}).
\end{enumerate}

Our setup in~\eqref{eq:orthogonal RKHS} introduces a projection operator before $\Psi_{\gamma}\ab_\gamma$ in the basis~\eqref{eq:decoupled_gamma_beta_S} and therefore it can be viewed as the \emph{unique} hybrid parametrization, which confines the function modeled by $\gamma$ to be orthogonal to the span the $\beta$ basis. Consequently, there is no correlation between optimizing $\ab_\gamma$ and $\ab_\beta$, making the problem more well-conditioned.

\subsection{Natural Parameters and Expectation Parameters} 
\label{sec:parameters}

To identify the natural parameter of GPs structured as~\eqref{eq:orthogonal RKHS},
we revisit the definition of natural parameters in exponential families. A distribution $p(x)$ belongs to an exponential family if we can write $p(x) = h(x) \exp( t(x)^\t\eta - A(\eta))$, where $t(x)$ is the sufficient statistics, $\eta$ is the natural parameter, $A$ is the log-partition function, and $h(x)$ is the carrier measure. 

Based on this definition, we can see that the choice of natural parameters is not unique. Suppose $\eta = H \tilde{\eta} + b$ for some constant matrix $H$ and vector $b$. Then $\tilde{\eta}$ is also an admissible natural parameter, because we can write $p(x) = \tilde{h}(x) \exp( \tilde{t}(x)^\t \tilde{\eta} - \tilde{A}(\tilde\eta))$, 
where $\tilde{t}(x) = H^\t t(x)$, $\tilde{h}(x) = h(x) \exp(t(x)^\t b)$, and $\tilde{A}(\tilde{\eta}) = A(H\tilde{\eta} + b)$. 
In other words, the natural parameter is only unique up to affine transformations.
If the natural parameter is transformed, the corresponding expectation parameter $\theta = \E_{p}[t(x)]$ also transforms accordingly to $\tilde{\theta} = H^\t \theta$. It can be shown that the Legendre primal-dual relationship between $\eta$ and $\theta$ is also preserved:  $\tilde{A}$ is also convex, and it satisfies $\tilde{\theta} = \nabla \tilde{A}(\tilde{\eta})$ and $\tilde{\eta} = \nabla \tilde{A}^*(\tilde{\theta})$, where $*$ denotes the Legendre dual function (see Appendix~\ref{app:nat param}).

We use this trick to identify the natural and expectation parameters of~\eqref{eq:orthogonal RKHS}.\footnote{While GPs do not admit a density function, the property of transforming natural parameters described above still applies. An alternate proof can be derived using KL divergence.
}
The relationships between natural, expectation, and model parameters are summarized in Figure~\ref{fig:relationship}.
\paragraph{Natural Parameters}
Recall that for Gaussian distributions the natural parameters are conventionally defined as $(\Sigma^{-1} \mu, \frac{1}{2}\Sigma^{-1})$. Therefore, to find the natural parameters of~\eqref{eq:orthogonal RKHS}, it suffices to show that $(\Sigma^{-1} \mu, \frac{1}{2}\Sigma^{-1})$ of~\eqref{eq:orthogonal RKHS} can be written as an affine transformation of some finite-dimensional parameters.
The matrix inversion lemma and 
the orthogonality 
of $(I - \Psi_\beta\Kb_\beta^{-1}\Psi_\beta^{\top})$ and $\Psi_\beta$
yield
\begin{align} \label{eq:natural parameters}
&\Sigma^{-1} \mu = ( I - \Psi_{\beta} \Kb_{\beta}^{-1} \Psi_{\beta}^\t ) \Psi_{\gamma} \jb_\gamma 
+ \Psi_{\beta} \jb_\beta \,,\qquad \tfrac{1}{2}\Sigma^{-1} = \tfrac{1}{2}( I - \Psi_\beta\Kb_\beta^{-1} \Psi_\beta^{\top})  +  \Psi_\beta  \bm{\Theta} \Psi_\beta^{\top}, \nonumber \\
&\textstyle
\text{where} \qquad 
\jb_\gamma = \ab_\gamma,
\qquad 
\jb_\beta = \Sb^{-1}  \Kb_\beta \ab_\beta, 
\qquad 
\bm{\Theta} =  \frac{1}{2} \Sb^{-1}.
\end{align}

Therefore, we call $(\jb_\gamma, \jb_\beta, \bm\Theta)$ the natural parameters of~\eqref{eq:orthogonal RKHS}. This choice is unique in the sense that $\Sigma^{-1} \mu $ is orthogonally parametrized.\footnote{The hybrid parameterization~\eqref{eq:decoupled_gamma_beta_S} 	 in~\citep[Appendix]{cheng2017variational}, which also considers $\beta$ explicitly in $\mu$, admits natural parameters as well. However, their relationship and the natural gradient update rule turn out to be more convoluted; we provide a thorough discussion in Appendix~\ref{app:nat param}.} 
The explicit inclusion of $\beta$ as part of $\alpha$ is important; otherwise there will be a constraint on $\jb_\alpha$ and $\jb_\beta$ because $\mu$ can only be parametrized by the $\alpha$-basis (see Appendix~\ref{app:nat param}).

\paragraph{Expectation Parameters} Once the new natural parameters are selected, we can also derive the corresponding expectation parameters. Recall for the natural parameters  $(\Sigma^{-1} \mu, \frac{1}{2}\Sigma^{-1})$, the associated expectation parameters are $(\mu, -(\Sigma + \mu\mu^\t) )$. 
Using the relationship between transformed natural and expectation parameters,  we find the expected parameters of~\eqref{eq:orthogonal RKHS} using the adjoint operators:
$ 
[
( I - \Psi_{\beta} \Kb_{\beta}^{-1} \Psi_{\beta}^\t )  \Psi_{\gamma},
\Psi_{\beta} 
]^\t \mu
= 
[
\mb_{\gamma \perp \beta}, 
\mb_{\beta}
]^\t$
and
$
- \Psi_\beta^\t (\Sigma + \mu\mu^\t) \Psi_\beta = \bm\Lambda
$,
where we have
\begin{align} \label{eq:expectation parameters}
\mb_{\gamma \perp \beta} = (\Kb_{\gamma}   - \Kb_{\gamma, \beta}  \Kb_\beta^{-1} \Kb_{\beta, \gamma} ) \jb_\gamma, 
\qquad 
\mb_{ \beta} = \Sb \jb_\beta, 
\qquad 
\bm\Lambda = - \Sb -  \mb_{ \beta}\mb_{ \beta}^\t.
\end{align}
Note the equations for $\beta$ in~\eqref{eq:natural parameters} and~\eqref{eq:expectation parameters}  have exactly the same relationship between the natural and expectation parameters in the standard Gaussian case, i.e. $(\Sigma^{-1} \mu, \frac{1}{2}\Sigma^{-1}) \leftrightarrow (\mu, -(\Sigma + \mu\mu^\t) )$.

\begin{figure}[t]
	\includegraphics[width=\textwidth]{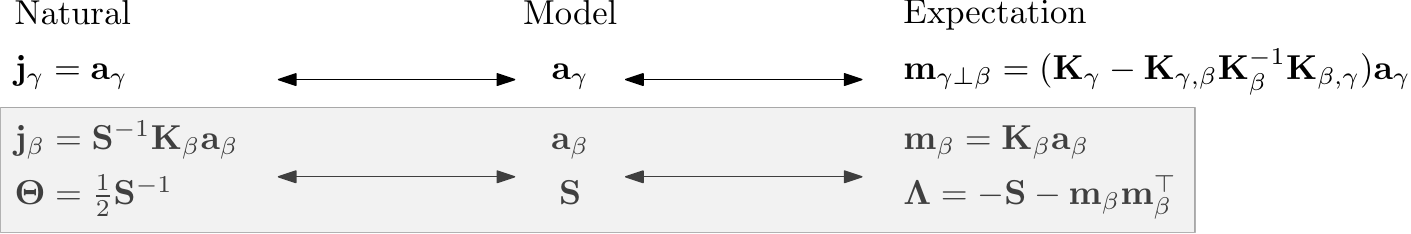}
	\caption{The relationship between the three parameterizations of the orthogonally decoupled basis. The box highlights the parameters in common with the standard coupled basis, which are decoupled from the additional $\ab_\gamma$ parameter. This is a unique property of our orthogonal basis}
   	\label{fig:relationship}
\end{figure}

\subsection{Natural Gradient Descent} \label{sec:NGD}
Natural gradient descent updates parameters according to the information geometry induced by the KL divergence~\citep{amari1998natural}. It is invariant to reparametrization and can normalize the problem to be well conditioned~\citep{martens2014new}. 
Let $F(\eta) = \nabla^2 \KL{q}{p_\eta}|_{q = p_\eta}$ be the Fisher information matrix, where $p_\eta$ denotes a distribution with natural parameter $\eta$. Natural gradient descent for natural parameters performs the update 
$\eta \leftarrow \eta -  \tau F(\eta)^{-1}\nabla_{\eta} \LL,$
where  $\tau > 0$ is the step size. Because directly computing the inverse $ F(\eta)^{-1}$  is computationally expensive, we use the duality between natural and expectation parameters in exponential families
and adopt the equivalent update 
$\eta \leftarrow \eta -  \tau  \nabla_{\theta} \LL$~\cite{hoffman2013stochastic,salimbeni2018natural}.
\paragraph{Exact Update Rules}
For our basis in~\eqref{eq:orthogonal RKHS}, the natural gradient descent step can be written  as
\begin{align}  \label{eq:ngd orthogonal GP}
\jb \leftarrow   \jb  - \tau \nabla_{\mb} \LL, 
\qquad
\bm\Theta \leftarrow \bm\Theta  - \tau \nabla_{\bm\Lambda} \LL,
\end{align}
where we recall $\LL$ is the negative variational lower bound in~\eqref{eq:VI problem}, $\jb = [\jb_\gamma, \jb_\beta]$ in~\eqref{eq:natural parameters}, and $\mb = [\mb_{\gamma \perp \beta}, \mb_\beta]$ in~\eqref{eq:expectation parameters}. 
As $\LL$ is defined in terms of $(\ab_{\gamma}, \ab_{\beta}, \Sb)$, to compute these derivatives we use chain rule (provided by the relationship in Figure~\ref{fig:relationship}) and obtain
\begin{subequations} \label{eq:ngd orthogonal GP (details)}
\begin{align}  
\jb_\gamma &\leftarrow   \jb_\gamma  - \tau  (\Kb_{\gamma} - \Kb_{\gamma, \beta} \Kb_\beta^{-1} \Kb_{\gamma, \beta} )^{-1}  \nabla_{\ab_\gamma} \LL,  \label{eq:ngd for j_gamma}
\\
\jb_\beta &\leftarrow   \jb_\beta  - \tau  ( \Kb_{\beta}^{-1} \nabla_{\ab_\beta} \LL  - 2  \nabla_{\Sb} \LL  \mb_{ \beta}  ), \label{eq:ngd for j_beta}
\\
\bm\Theta &\leftarrow \bm\Theta  + \tau \nabla_{\Sb} \LL.
\label{eq:ngd for Theta}
\end{align}
\end{subequations}
Due to the orthogonal choice of natural parameter definition, the update for the $\gamma$ and the $\beta$ parts are independent. Furthermore, one can show that the update for $\jb_\beta$ and $\bm\Theta$ is exactly the same as the natural gradient descent rule for the standard coupled basis~\citep{hensman2013gaussian}, and that the update for the residue part $\jb_\gamma$ is equivalent to  functional gradient descent~\citep{kivinen2004online} in the  subspace orthogonal to the span of $\Psi_\beta$.%

\paragraph{Approximate Update Rule}
\label{sec:preconditioning}
We described the natural gradient descent update for the orthogonally decoupled GPs in~\eqref{eq:orthogonal RKHS}. However, in the regime where $\abs{\gamma} \gg \abs{\beta}$, computing~\eqref{eq:ngd for j_gamma} becomes infeasible. Here we propose an approximation of~\eqref{eq:ngd for j_gamma} by approximating $\Kb_\gamma$ with a diagonal-plus-low-rank structure. Because the inducing points $\beta$ are selected to globally approximate the function landscape, one sensible choice is to approximate $\Kb_\gamma$ with a Nystr\"om approximation based on $\beta$ and a diagonal correction term: $\Kb_\gamma \approx \Db_{\gamma | \beta} +  \Kb_{\gamma | \beta}$, where
$\Db_{\gamma | \beta} = \diag(\Kb_\gamma - {\Kb}_{\gamma | \beta})$,  ${\Kb}_{\gamma | \beta} = \Kb_{\gamma, \beta} \Kb_\beta^{-1} \Kb_{\beta, \gamma} $, and $\diag$ denotes extracting the diagonal part of a matrix. FITC~\citep{snelson2006sparse} uses a similar idea to approximate the prior distribution~\citep{quinonero2005unifying}, whereas here it is used to derive an approximate update rule without changing the problem. 
This leads to a simple  update rule
\begin{align} \label{eq:approximate ngd for j_beta}
\jb_\gamma \leftarrow   \jb_\gamma  - \tau  (\Db_{\gamma | \beta} + \epsilon \Ib )^{-1}  \nabla_{\ab_\gamma} \LL,
\end{align}
where a jitter $\epsilon > 0$ is added to ensure stability. 
This update rule uses a diagonal scaling  $ (\Db_{\gamma | \beta} + \epsilon \Ib )^{-1}$, which is independent of $\jb_\beta$ and $\bm{\Theta}$. Therefore, while one could directly use the update~\eqref{eq:approximate ngd for j_beta}, in implementation, we propose to replace~\eqref{eq:approximate ngd for j_beta} with an adaptive coordinate-wise gradient descent algorithm (e.g. ADAM~\cite{Kingma2014Adam:Optimization}) to update the $\gamma$-part.  
Due to the orthogonal structure, the overall computational complexity is in $O(|\gamma||\beta| + |\beta|^3)$. While this is more than the $O(|\gamma| + |\beta|^3)$ complexity of the original decoupled approach~\citep{cheng2017variational}; the experimental results suggest the additional computation is worth the large performance improvement.

\section{Results}
We empirically assess the performance of our algorithm in multiple regression and classification tasks. We show that
\begin{enumerate*}[label=\itshape\alph*\upshape)]
\item  given fixed wall-clock time, the proposed orthogonally decoupled basis outperforms existing approaches;
\item given the same number of inducing points for the covariance, our method almost always improves on the coupled approach (which is in contrast to the previous decoupled bases);
\item using natural gradients can dramatically improve performance, especially in regression.
\end{enumerate*}

We compare updating our orthogonally decoupled basis with adaptive gradient descent using the Adam optimizer \citep{Kingma2014Adam:Optimization} (\OrthDecoupled), and using the approximate natural gradient descent rule described in Section \ref{sec:preconditioning} (\OrthDecoupledNat).
As baselines, we consider the original decoupled approach of \citet{cheng2017variational} (\Decoupled) %
and the hybrid approach suggested in their Appendix (\Hybrid).  We compare also to the standard coupled basis with and without natural gradients (\CoupledNat and \Coupled, respectively). We make generic choices for hyperparameters, inducing point initializations, and data processing, which are detailed in \suppmat~\ref{sec:experimental details}. 
Our code \footnote{\href{https://github.com/hughsalimbeni/orth\_decoupled\_var\_gps}{\color{blue}\texttt{https://github.com/hughsalimbeni/orth\_decoupled\_var\_gps}}} and datasets \footnote{\href{https://github.com/hughsalimbeni/bayesian_benchmarks}{\color{blue}\texttt{https://github.com/hughsalimbeni/bayesian\_benchmarks}}} are publicly available.

\begin{figure*}[t!]
    \centering
    \begin{subfigure}{0.33\textwidth}
		\centering
        \includegraphics[height=3.2cm]{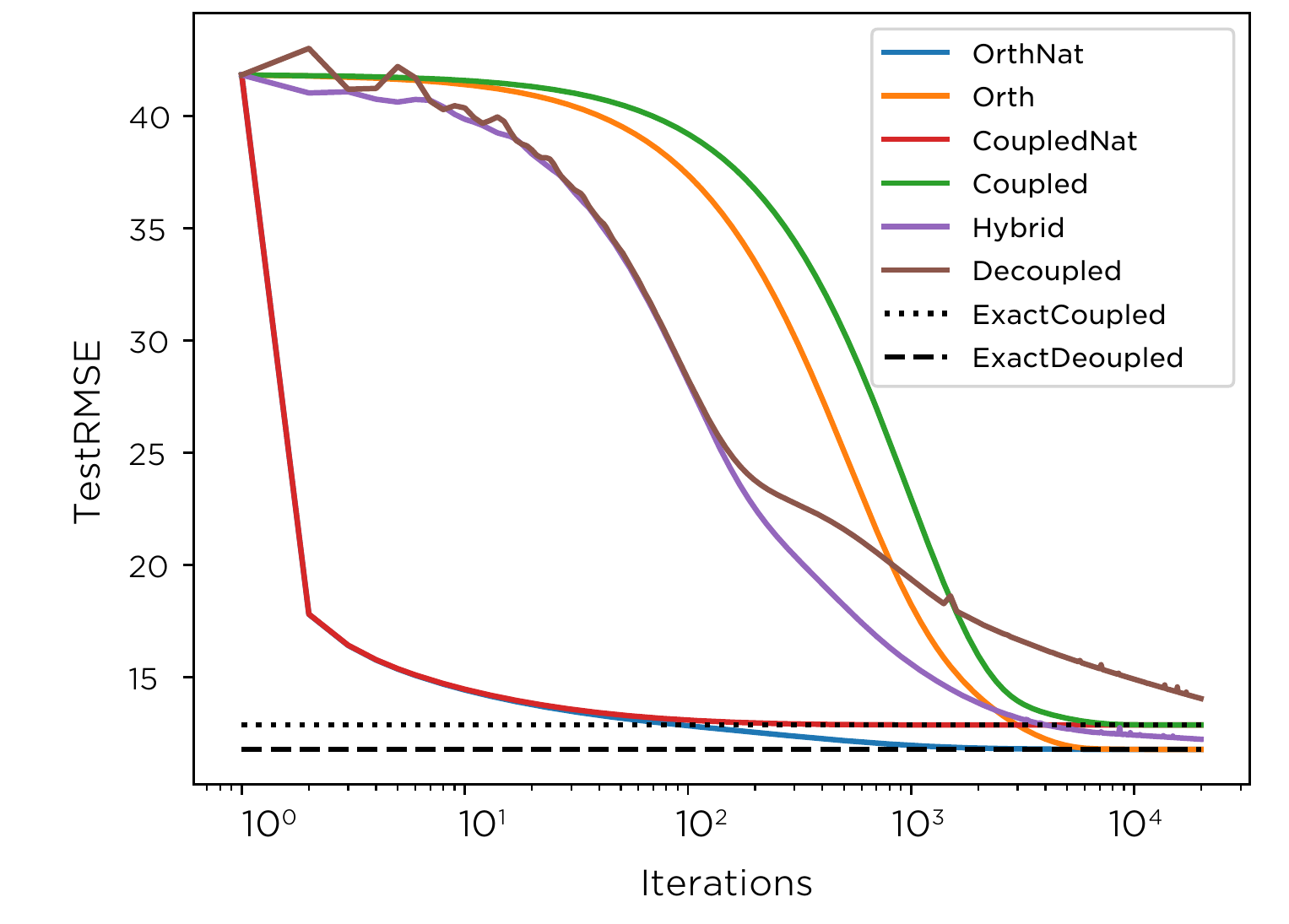}
        \caption{Test RMSE}
         \label{fig:test rmse}
    \end{subfigure}%
    \begin{subfigure}{0.33\textwidth}
        \centering
        \includegraphics[height=3.2cm]{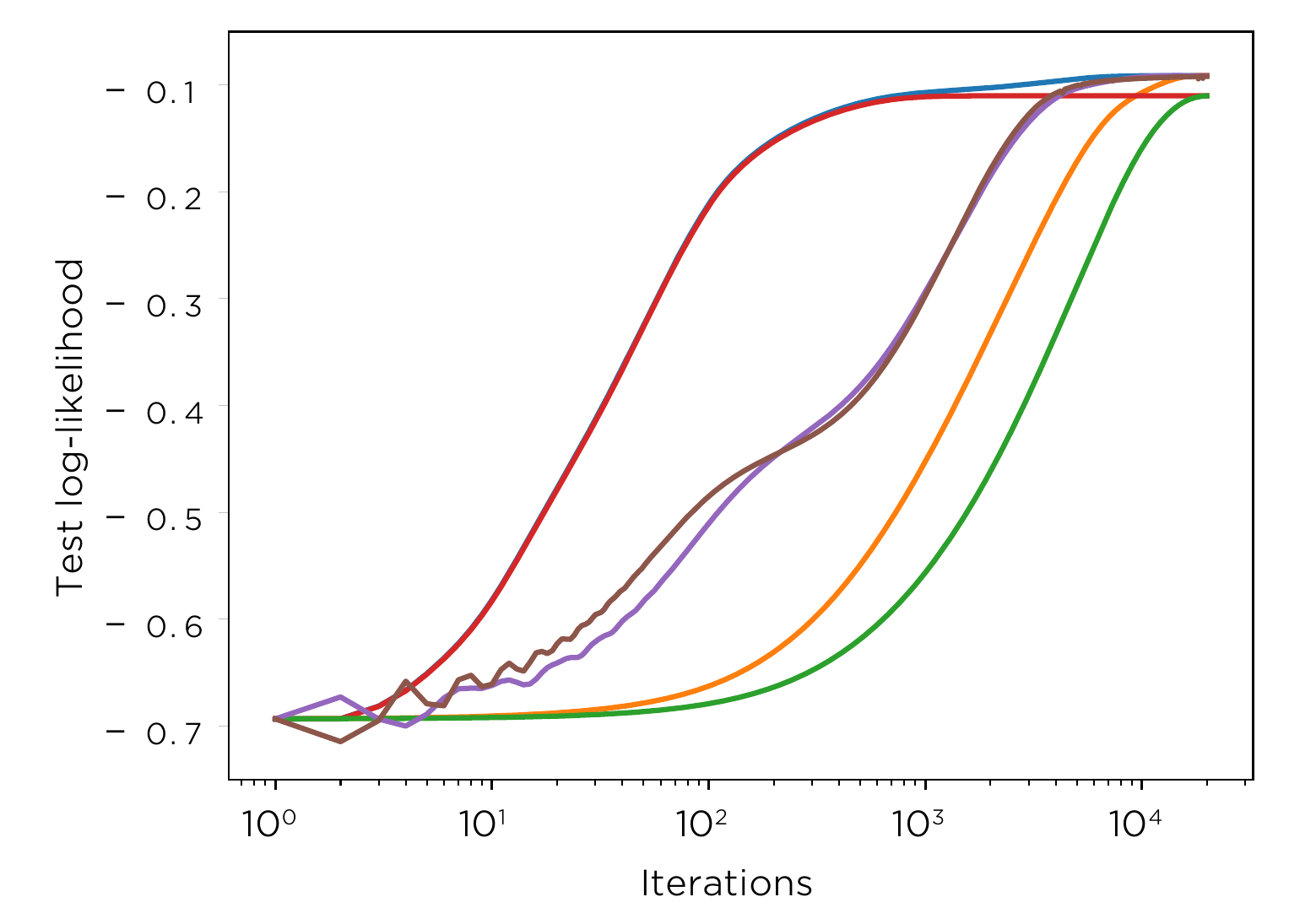}
        \caption{Test log-likelihood}
        \label{fig:test log-likelihood}
    \end{subfigure}
    \begin{subfigure}{0.33\textwidth}
        \centering
        \includegraphics[height=3.2cm]{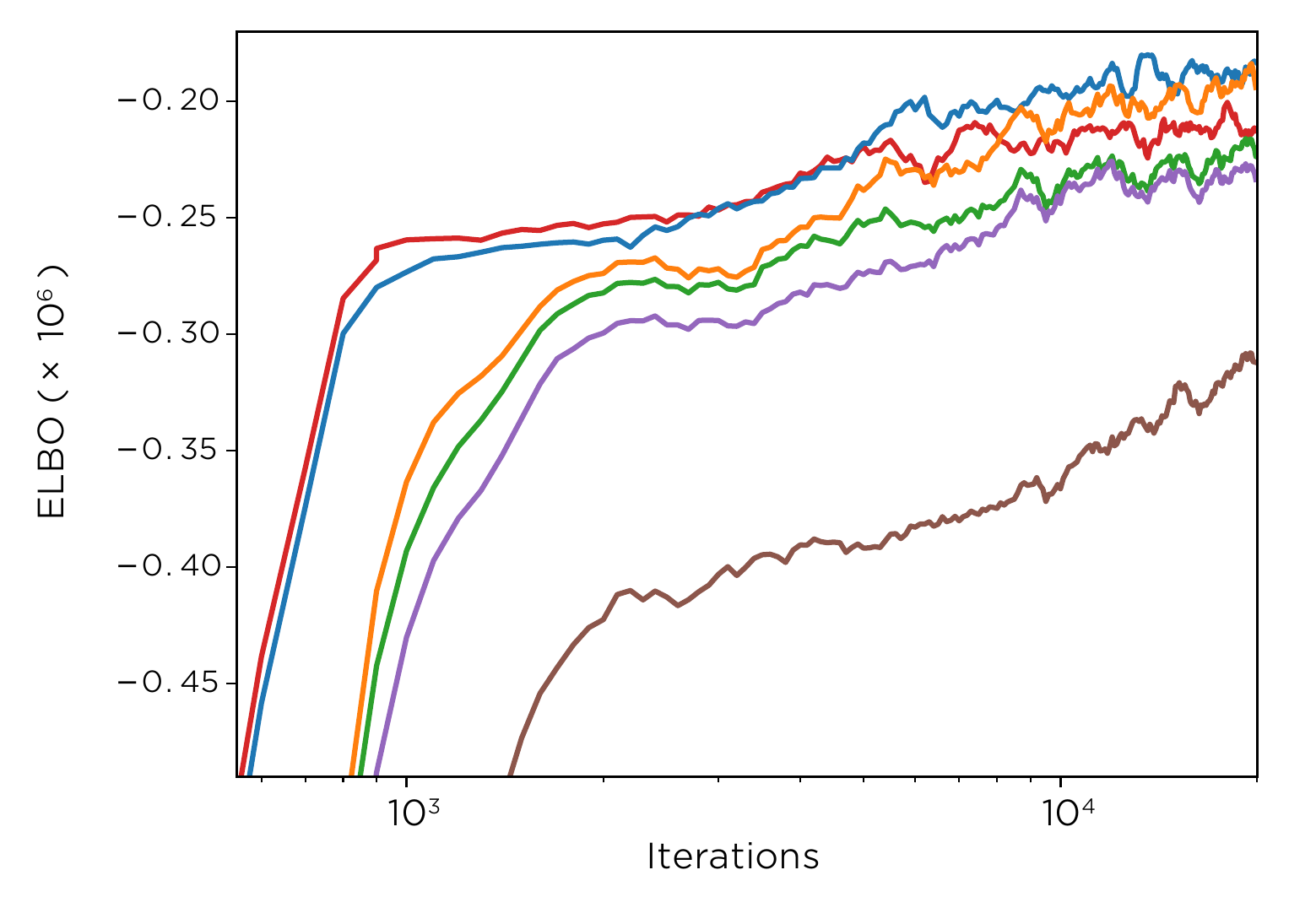}
        \caption{ELBO}
        \label{fig:elbo}
    \end{subfigure}
    \caption{Training curves for our models in three different settings. %
    Panel~(\subref{fig:elbo}) has $|\gamma|=3500, |\beta|=1500$ for the decoupled bases and $|\beta|=2000$ for the coupled bases. Panels~(\subref{fig:test rmse}) and~(\subref{fig:test log-likelihood}) have $|\gamma|=|\beta|=500$ and fixed hyperparameters and full batches to highlight the convergence properties of the approaches. Panels~(\subref{fig:test rmse}) and ~(\subref{fig:elbo}) use a Gaussian likelihood. Panel~(\subref{fig:test log-likelihood}) uses a Bernoulli likelihood.} 
\end{figure*}

\paragraph{Illustration} Figures \ref{fig:test rmse} and \ref{fig:test log-likelihood} show a simplified setting to illustrate the difference between the methods. In this example, we fixed the inducing inputs and hyperparameters, and optimized  the rest of the variational parameters. All the decoupled methods then have the same global optimum, so we can easily assess their convergence property. With a Gaussian likelihood we also computed the optimal solution analytically as an optimal baseline, %
although this requires inverting an $|\alpha|$-sized matrix, and therefore is not useful as a practical method. We include the expressions of the optimal solution in  \suppmat~\ref{sec:analytic_expressions}. We set $|\gamma|=|\beta|=500$ for all bases and conducted experiments on  \dataset{3droad} dataset ($N=434874$, $D=3$) for regression with a Gaussian likelihood and \dataset{ringnorm} data ($N=7400$, $D=21$) for classification with a Bernoulli likelihood. 
Overall, the natural gradient methods are much faster to converge than their ordinary gradient counterparts. \Decoupled  fails to converge to the optimum after 20K iterations, even in the Gaussian case. We emphasize that, unlike our proposed approaches, \Decoupled leads to a non-convex optimization problem.

\paragraph{Wall-clock comparison} To investigate large-scale performance, we used \dataset{3droad} with a large number of inducing points. We used a computer with a Tesla K40 GPU and found that, in wall-clock time, the orthogonally decoupled basis with $|\gamma|=3500,~|\beta|=1500$ was equivalent to a coupled model with $|\beta|=2000$ (about $0.7$ seconds per iteration) in our tensorflow~\citep{abadi2016tensorflow} implementation. %
Under this setting, we show the ELBO in Figure~\ref{fig:elbo} and the test log-likelihood and accuracy in Figure~\ref{fig:extra_plots} of the \suppmat~\ref{sec:further_results}. \OrthDecoupledNat performs the best, both in terms of log-likelihood and accuracy. The  the highest test log-likelihood of \OrthDecoupledNat  is $-3.25$, followed by  \OrthDecoupled  ($-3.26$).  \Coupled ($-3.37$) and \CoupledNat ($-3.33$)  both outperform \Hybrid ($-3.39$) and \Decoupled ($-3.66$).

\paragraph{Regression benchmarks} We applied our models on 12 regression datasets ranging from 15K to 2M points. To enable feasible computation on multiple datasets, we downscale (but keep the same ratio) the number of inducing points to $|\gamma|=700,|\beta|=300$ for the decoupled models, and $|\beta|=400$ for the coupled mode. 
We compare also to the coupled model with $|\beta|=300$ to establish whether extra computation always improves the performance of the decoupled basis. The  test mean absolute error (MAE) results are summarized in Table~\ref{table:regression_acc}, and the full results for both test log-likelihood and MAE are given in Appendix~\ref{sec:further_results}.  \textsc{OrthNat} overall is the most competitive basis. And, by all measures, the orthogonal bases outperform their coupled counterparts with the same $\beta$, except for \textsc{Hybrid} and \textsc{Decoupled}.

\begin{table}[h]
\scalebox{0.72}{
\begin{tabular}{lccccccccc} %
\toprule
                &       \textsc{Coupled}$\dagger$ & \textsc{CoupledNat}$\dagger$ & \textsc{Coupled} & \textsc{CoupledNat} & \textsc{Orth} &    \textsc{OrthNat} & \textsc{Hybrid} & \textsc{Decoupled} \\
\midrule
           Mean &                           0.298 &                       0.295 &           0.291 &              0.290 &  0.284 &  \textbf{0.282} &                   0.298 &             0.361 \\
         Median &                           0.221 &                        0.219 &            0.215 &               0.213 &  0.211 &   \textbf{0.210} &                     0.225 &              0.299 \\
      Avg Rank &  6.083(0.19)&	5.00(0.33)&	3.750(0.26)&	2.417(0.31)&	2.500(0.47)&	\textbf{1.833(0.35)}&	6.417(0.23)&	8(0.00)\\
\bottomrule
\end{tabular}
}
\caption{Regression summary for normalized test MAE on 12 regression datasets, with standard errors for the average ranks. The coupled bases had $|\beta|=400$ ($|\beta|=300$ for the $\dagger$ bases), and the decoupled all had $\gamma=700$, $\beta=300$. See Appendix~\ref{sec:further_results} for the full results.} 
\label{table:regression_acc}
\end{table}

\paragraph{Classification benchmarks}
We compare our method with state-of-the-art fully connected neural networks with Selu activations~\citep{klambauer2017self}. We adopted the experimental setup from \citep{klambauer2017self}, using the largest 19 datasets (4898 to 130000 data points). For the binary datasets we used the Bernoulli likelihood, and for the multiclass datasets we used the robust-max likelihood~\citep{hernandez2011robust}. The same basis settings as for the regression benchmarks were used here. 
\OrthDecoupled performs the best in terms of median, and  \OrthDecoupledNat is best ranked. The neural network wins in terms of mean, because it substantially outperforms all the GP models in one particular dataset (\dataset{chess-krvk}), which skews the mean performance over the 19 datasets.  We see that our orthogonal bases on average improve the coupled bases with equivalent wall-clock time, although for some datasets the coupled bases are superior. Unlike in the regression case, it is not always true that using natural gradients improve performance, although on average they do. This holds for both the coupled and decoupled bases. 

\begin{table}[h]
\label{table:classification}
\centering
\scalebox{0.85}{
\begin{tabular}{lccccccc}%
\toprule
                    &             Selu & \textsc{Coupled} & \textsc{CoupledNat} &    \textsc{Orth} & \textsc{OrthNat} &  \textsc{Hybrid} & \textsc{Decoupled} \\
\midrule              Mean &                    \textbf{91.6} &             90.4 &                90.2 &             90.6 &             90.3 &             89.9 &               89.0 \\
             Median &                    93.1 &             94.8 &                93.6 &             \textbf{95.6} &             93.6 &             93.4 &               92.0 \\
           Average rank &4.16(0.67)&	3.89(0.42)&	3.53(0.45)	&3.68(0.35)&	\textbf{3.42(0.31)}	&3.89(0.38)&	5.42(0.51) \\
           
\bottomrule
\end{tabular}
}
\caption{Classification test accuracy(\%) results for our models, showing also the results from \cite{klambauer2017self}, with standard errors for the average ranks. See Table~\ref{table:classification_acc_full} in the \suppmat for the complete results.}
\label{table:classification_acc}
\end{table}

Overall, the empirical results demonstrate  that the orthogonally decoupled basis is superior to the coupled basis with the same wall-clock time, averaged over datasets. It is important to note that for the \emph{same} $\beta$, adding extra $\gamma$ increases performance for the orthogonally decoupled basis in almost all cases, but not for \Hybrid of \Decoupled. While this does add additional computation, the ratio between the extra computation for additional $\beta$ and that for additional $\gamma$ decreases to zero as $\beta$ increases. That is, eventually the cubic scaling in $|\beta|$ will dominate the linear scaling in $|\gamma|$.

\section{Conclusion}

We present a novel orthogonally decoupled basis for variational inference in GP models. Our basis is constructed by extending the standard coupled basis with an additional component to model the mean residues. Therefore, it extends the standard coupled basis~\citep{titsias2009variational,hensman2013gaussian} and achieves better performance. 
We show how the natural parameters of our decoupled basis can be identified and propose an approximate natural gradient update rule, which %
significantly improves the optimization performance over original decoupled approach~\citep{cheng2017variational}. %
Empirically, our method demonstrates strong performance in multiple regression and classification tasks. %

\medskip
{\footnotesize
\bibliographystyle{abbrvnat}
\bibliography{bib}
}
\clearpage
\appendix
\section*{Appendix}

\section{Variational Inference Problem} 
\label{app:variational inference}

In this section, we provides details of implementing the variational inference problem
\begin{align} \tag{\ref{eq:VI problem}}
\LL(q) =  - \sum_{n=1}^{N} \E_{q(f(x_n))}[ \log p(y_n| f(x_n))] + \KL{q}{p}
\end{align}
when the variational posterior $q(f) = \GP_\HH( \mu, \Sigma)$ is parameterized using a decoupled basis
\begin{align} \tag{\ref{eq:general subspace parametrization}}
\mu  = \Psi_\alpha \ab, \qquad 
\Sigma = I + \Psi_\beta \Ab \Psi_\beta^\t.
\end{align}
Without loss of generality, we assume $\Ab = \Kb_\beta^{-1}  \Sb \Kb_\beta^{-1}  - \Kb_\beta^{-1}$. That is, we focus on the following form of parametrization with  $\Sb \succeq 0 $, 
\begin{align} \label{eq:candidate form}
\mu = \Psi_{\alpha}  \ab, 
\qquad 
\Sigma =  I + \Psi_{\beta} \Ab \Psi_{\beta}^\t 
\coloneqq ( I - \Psi_{\beta} \Kb_\beta^{-1} \Psi_{\beta}^\top ) +  \Psi_\beta  \Kb_\beta^{-1} \Sb  \Kb_\beta^{-1} \Psi_\beta^\t. 
\end{align}

\subsection{KL Divergence}

We first show how the KL divergence can be computed using finite-dimensional variables.  The proof is similar to strategy in~\citep[Appendix]{cheng2017variational}
\begin{proposition} \label{pr:KL divergence}
	For $p = \GP_\HH(0, I)$ and $q(f) = \GP_\HH(\mu, \Sigma)$ with
	\begin{align*} 
	\mu = \Psi_{\alpha} \ab, 
	\qquad 
	\Sigma = \left( I - \Psi_{\beta} \Kb_\beta^{-1} \Psi_{\beta}^\top \right) +  \Psi_\beta \Kb_\beta^{-1} \Sb  \Kb_\beta^{-1} \Psi_\beta^\top
	\end{align*}
	Then It satisfies
	\begin{align*}
	\KL{q}{p} &= \frac{1}{2} \left(  \ab^\t \Kb_{{\alpha}} \ab
	+ \tr{ \Sb \Kb_{\beta}^{-1} }  - \log | \Sb |  + \log | \Kb_\beta | - |\beta| \right)
	\end{align*} 
\end{proposition}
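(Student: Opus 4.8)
The plan is to reduce everything to the standard Gaussian KL formula, suitably regularized so that the individually divergent pieces cancel. Because the prior is $\GP_\HH(0, I)$, the divergence takes the form $\KL{q}{p} = \tfrac{1}{2}\left( \mu^\t \mu + \tr{\Sigma - I} - \log|\Sigma| \right)$. In the infinite-dimensional RKHS setting the naive pieces $\tr{\Sigma}$ and the dimension count each diverge, but since $\Sigma - I = \Psi_\beta \Ab \Psi_\beta^\t$ with $\Ab = \Kb_\beta^{-1}\Sb\Kb_\beta^{-1} - \Kb_\beta^{-1}$ is finite-rank, both the combined trace $\tr{\Sigma - I}$ and the log-determinant are finite; this is precisely the finiteness guaranteed by the finite-perturbation construction already invoked in the text. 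I would first record this formula and then evaluate its three terms one at a time, mirroring the strategy of the appendix of \citet{cheng2017variational}.

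For the mean term, I would use the reproducing property $\phi(x)^\t \phi(x') = k(x,x')$, i.e. $\Psi_\alpha^\t \Psi_\alpha = \Kb_\alpha$, to obtain $\mu^\t \mu = \ab^\t \Psi_\alpha^\t \Psi_\alpha \ab = \ab^\t \Kb_\alpha \ab$. For the trace term, I would write $\Sigma - I = \Psi_\beta \Kb_\beta^{-1}\Sb\Kb_\beta^{-1}\Psi_\beta^\t - \Psi_\beta \Kb_\beta^{-1}\Psi_\beta^\t$ and apply the cyclic property of the trace together with $\Psi_\beta^\t \Psi_\beta = \Kb_\beta$; the two contributions then collapse to $\tr{\Kb_\beta^{-1}\Sb\Kb_\beta^{-1}\Kb_\beta} - \tr{\Kb_\beta^{-1}\Kb_\beta} = \tr{\Sb\Kb_\beta^{-1}} - |\beta|$.

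The only genuinely delicate step is the log-determinant, since $\Sigma$ acts on a possibly infinite-dimensional space. Here I would appeal to the matrix-determinant (Sylvester) identity in operator form, $\det(I + \Psi_\beta \Ab \Psi_\beta^\t) = \det(\Ib_{|\beta|} + \Ab\,\Psi_\beta^\t \Psi_\beta)$, which reduces the computation to a finite $|\beta| \times |\beta|$ determinant. Substituting $\Psi_\beta^\t \Psi_\beta = \Kb_\beta$ gives $\Ab \Kb_\beta = \Kb_\beta^{-1}\Sb - \Ib_{|\beta|}$, so the bracket telescopes to $\Ib_{|\beta|} + \Ab\Kb_\beta = \Kb_\beta^{-1}\Sb$ and therefore $\log|\Sigma| = \log|\Kb_\beta^{-1}\Sb| = \log|\Sb| - \log|\Kb_\beta|$. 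Assembling the three evaluated terms and multiplying by $\tfrac{1}{2}$ yields the claimed expression, the sign flip turning $-\log|\Sigma|$ into $-\log|\Sb| + \log|\Kb_\beta|$. The main obstacle is thus purely the rigorous justification of the operator-level trace and determinant identities; once these are granted, the remaining manipulations are routine cancellations.
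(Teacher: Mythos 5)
Your algebra is correct and produces exactly the claimed expression: $\mu^\t\mu = \ab^\t\Kb_\alpha\ab$ from $\Psi_\alpha^\t\Psi_\alpha = \Kb_\alpha$, the trace term by cyclicity, and the log-determinant by Sylvester's identity, which collapses $|\Sigma|$ to $|\Kb_\beta^{-1}\Sb| = |\Sb|/|\Kb_\beta|$. Keep in mind, though, that the paper does not actually prove this proposition in-house; it only points to the strategy in the appendix of \citet{cheng2017variational}, so the relevant comparison is with that strategy, and there your starting point is the genuine difference. You posit the operator-level formula $\KL{q}{p} = \frac{1}{2}\left(\mu^\t\mu + \tr{\Sigma - I} - \log|\Sigma|\right)$ and regularize its divergent pieces, but, as the paper's own footnote warns, $\GP_\HH(0,I)$ is not a Gaussian measure on $\HH$ (the identity is not trace-class), so this cannot be invoked as the standard Gaussian KL on $\HH$; it must itself be justified. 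The cited strategy obtains it by exploiting that $\mu$ and $\Sigma - I$ are supported on the finite-dimensional subspace spanned by $\Psi_{\alpha\cup\beta}$: since $q$ and $p$ then share the same conditional process given the function values tied to that subspace, the chain rule for KL collapses the process-level divergence to a finite-dimensional Gaussian KL, in which your three terms appear with every trace and determinant automatically finite (this is also the content of the \citet{matthews2016sparse} result the paper leans on, and it is made explicit for the orthogonal basis in Appendix~\ref{app:inducing_point_construction}). Once that reduction is in hand, your term-by-term evaluation is precisely the computation that finishes the proof; so the only gap is the one you flagged yourself, and the clean way to close it is the finite-dimensional reduction rather than operator-level Fredholm theory on $\HH$.
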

For the orthogonally decoupled basis~\eqref{eq:orthogonal RKHS} in particular, we can write 
\begin{align*}
\KL{q}{p} &= \frac{1}{2} \left(  \ab_{\gamma}^\t (\Kb_{\gamma} -\Kb_{\gamma,\beta} \Kb_{\beta}^{-1} \Kb_{\beta,\gamma})  \ab_{\gamma} + \ab_{\beta}^\t \Kb_{\beta} \ab_{\beta}
+ \tr{ \Sb \Kb_{\beta}^{-1} }  - \log | \Sb |  + \log | \Kb_\beta | - |\beta| \right).
\end{align*}

\subsection{Expected Log-Likelihoods}

The expected log-likelihood can be computed by first computing the predictive Gaussian distribution $q(f(x)) = \NN(f(x) | m(x), s(x) )$ for each data point $x$. For example, for the orthogonally decoupled basis~\eqref{eq:orthogonal RKHS}, this is given as 
\begin{align*}%
m(x) &= (\kb_{x,\gamma} - \kb_{x, \beta} \Kb^{-1}_\beta \Kb_{\beta, \gamma} ) \ab_{\gamma} + \kb_{x, \beta} \ab_{\beta}
\\
s(x) &= (k_x - \kb_{x,\beta} \Kb^{-1}_\beta \kb_{\beta, x}) + 
\kb_{x,\beta} \Kb^{-1}_{\beta} \Sb\Kb^{-1}_{\beta} \kb_{\beta, x}.
\end{align*}
Given $m(x)$ and $s(x)$, then the expected log-likelihood can be computed exactly (for Gaussian case) or using quadrature approximation. %

\subsection{Gradient Computation}

Using the above formulas, a differentiable computational graph can be constructed and then the gradient can to $(\ab_{\gamma}, \ab_{\beta}, \Lb)$ can be computed using automatic differentiation. When $ \ab_{\gamma}^\t \Kb_{\gamma} \ab_{\gamma}$ in the KL-divergence is further approximated by column sampling $\Kb_{\gamma}$, an unbiased gradient can be computed in time complexity $O(|\gamma||\beta| + |\beta|^3 )$.

\section{Convexity of the Variational Inference Problem} \label{app:convexity}

Here we show the objective function in~\eqref{eq:VI problem} is strictly convex in $(\ab_{\gamma}, \ab_{\beta}, \Lb)$ if the likelihood is log-strictly-convex. 

\subsection{KL Divergence}
We first study the KL divergence term. It is easy to see that it is strongly convex in $(\ab_{\gamma}, \ab_{\beta})$. When $\Sb = \Lb \Lb^\top$, where $\Lb$ is lower triangle and with positive diagonal terms, the KL divergence is strongly convex in $\Lb$ as well. To see this, we notice that 
\begin{align*}
- \log | \Sb | = - \log | \Lb\Lb^\t | =    -2 \sum_{i=1}^{|\beta|} \log | L_{ii} |
\end{align*}
is strictly convex and %
\begin{align*}
\tr{\Kb_\beta^{-1} \Sb } 
= \tr{ \Lb\Lb^\t \Kb_\beta^{-1}  }
=  \vec{\Lb}^\top ( \Ib \otimes  \Kb_\beta^{-1}   ) \vec{\Lb}
\end{align*}
is strongly convex, because $ \Kb_\beta^{-1} \succ 0 $.

\subsection{Expected Log-likelihood}

Here we show the negative expected log-likelihood part is strictly convex. For the  negative expected log-likelihood, let $F( \cdot ) = - \log(y_n | \cdot )$ and we can write 
\begin{align*}  %
\EE_n &= \E_{q(f(x_n))}[ - \log p(y_n| f(x_n)) ]  \\
&= \E_{\zeta, \bm{\xi} }[ 
F( m(x_n ) + \zeta + \kb_{x_n, \beta}  \Kb_\beta^{-1}  \Lb \bm{\xi}  )   ]
\end{align*}
in which 
$
\zeta \sim \NN(\zeta | 0, k_{x_n} - \kb_{x_n, \beta} \Kb_\beta^{-1} \kb_{\beta, x_n} )$ and $
\bm{\xi} \sim \NN(\bm{\xi} | 0, \Ib )$. 

Then we give a lemma below. 
\begin{lemma} \label{lm:strictly convex preversed by linearity}
	Suppose $f$ is $\theta$-strictly convex. Then $f(Ax)$ is also $\theta$-strictly convex.
\end{lemma}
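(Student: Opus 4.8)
The plan is to check the inequality defining $\theta$-strict convexity for the composition $g(x) := f(Ax)$ directly, leaning on nothing more than the linearity of $A$ together with the hypothesis on $f$. First I would fix the working definition: $f$ being $\theta$-strictly convex means that for all $u, v$ in its domain and all $t \in (0,1)$,
\[
f(tu + (1-t)v) \;\le\; t\, f(u) + (1-t)\, f(v) \;-\; \theta\, t(1-t)\, r(u,v),
\]
where the remainder satisfies $r(u,v) > 0$ whenever $u \ne v$ (for the strong-convexity reading, $r(u,v) = \tfrac12 \norm{u-v}^2$). Everything then reduces to reproducing this same inequality for $g$ with the same constant $\theta$.

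Next I would substitute a convex combination into $g$ and commute the combination with $A$ by linearity:
\[
g(tx + (1-t)y) \;=\; f\!\big(A(tx + (1-t)y)\big) \;=\; f\!\big(t\,Ax + (1-t)\,Ay\big).
\]
Applying the hypothesis on $f$ at the image points $u = Ax$ and $v = Ay$, and using $f(Ax) = g(x)$, $f(Ay) = g(y)$, yields
\[
g(tx + (1-t)y) \;\le\; t\, g(x) + (1-t)\, g(y) \;-\; \theta\, t(1-t)\, r(Ax, Ay).
\]
This is already the target inequality, up to identifying $r(Ax,Ay)$ with the remainder appropriate for $g$.

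The crux, and the step I expect to be the main obstacle, is the transfer of the modulus through $A$: I must bound $r(Ax, Ay)$ from below by the remainder in $x, y$ so that the \emph{same} $\theta$ is retained. This is precisely where the structure of $A$ enters. For a value-based remainder the identity $r(Ax,Ay) = r(g(x),g(y))$ makes the transfer immediate for every $A$; for the norm-based (strong-convexity) remainder one needs $\norm{Ax - Ay} \ge \norm{x-y}$, i.e. $A^\t A \succeq I$, and, more basically, the injectivity of $A$ to guarantee $Ax \ne Ay$ whenever $x \ne y$ and hence $r(Ax,Ay) > 0$. I would therefore make explicit which remainder and which norm the paper intends, and check that the linear map arising in the application meets the corresponding condition; injectivity of $A$ is the single structural fact that both preserves strictness and carries the modulus across.
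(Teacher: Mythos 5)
Your instinct that everything hinges on how the modulus transfers through $A$ is exactly right, but the resolution you settle on is not the paper's, and under your primary (norm-based) reading the lemma is simply false as stated: taking $A = 0$ makes $f(Ax)$ constant, and the extra hypotheses you would need ($A$ injective, $A^\t A \succeq I$) are neither assumed by the paper nor satisfiable in its application --- in the expected log-likelihood term of Appendix~\ref{app:convexity}, the linear map sends the high-dimensional variational parameters $(\ab_\gamma, \ab_\beta, \Lb)$ to the \emph{scalar} argument of $F$, so it is maximally non-injective. The missing idea is the paper's (implicit, nonstandard) definition of $\theta$-strict convexity: a first-order inequality whose remainder is measured by the gradient--displacement pairing rather than by the norm of the displacement,
\begin{align*}
f(v) - f(u) \;\geq\; \lr{\nabla f(u)}{v-u} + \frac{\theta}{2}\left( \lr{\nabla f(u)}{v-u} \right)^2 .
\end{align*}
With this modulus the transfer through \emph{any} linear $A$ is an identity, not an inequality: setting $u = Ax$, $v = Ay$, $g(x) = f(Ax)$, the chain rule gives $\lr{\nabla f(u)}{A(y-x)} = \lr{A^\t \nabla f(u)}{y-x} = \lr{\nabla g(x)}{y-x}$, so both the linear term and the quadratic remainder map exactly onto those of $g$, and the same $\theta$ survives for every $A$ --- no injectivity, no singular-value bound. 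That observation is the paper's entire proof.

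Your hedge about a ``value-based'' remainder is the closest you come to this: a remainder defined intrinsically from $f$'s local data at the two points does push forward through $A$ for free, which is precisely why the paper's gradient-based remainder works. But you never instantiate such a remainder, and your stated fallback conditions would make the lemma unusable where it is needed. So as written, the proposal leaves the crux step conditional on hypotheses the paper neither makes nor could meet; to close the gap, adopt the gradient-based definition above (or any formulation whose remainder is covariant under linear reparametrization) and let $\nabla g(x) = A^\t \nabla f(Ax)$ do all the work.
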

\begin{proof}
	Let $u  = A x$ and $ v = A y$. Let $g(x) = f(Ax)$. 
	\begin{align*}
	f(v) - f(u) &\geq  \lr{\nabla f(u)}{v - u } + \frac{\theta}{2} ( \lr{\nabla f(u)}{v - u } )^2 \\
	&= \lr{\nabla f(u)}{ A(y-x) } + \frac{\theta}{2} ( \lr{\nabla f(u)}{  A(y-x) } )^2 \\
	&= \lr{A^\top \nabla f(u)}{ y-x } + \frac{\theta}{2} ( \lr{A^\top \nabla f(u)}{ y-x } )^2 \\
	&= \lr{\nabla g(x)}{ y-x } + \frac{\theta}{2} ( \lr{\nabla g(x)}{ y-x } )^2  \qedhere
	\end{align*}
\end{proof}
Because $F$ is strictly convex when likelihood is log-strictly-concave and $m(x_n)$ is linearly parametrized, the desired strict convexity follows.

\section{Uniqueness of Parametrization and Natural Parameters}
\label{app:nat param}

Here we provide some additional details regarding natural parameters and natural gradient descent. 

\subsection{Necessity of Including $\beta$ as Subset of $\alpha$}

We show that the partition condition in Section~\ref{sec:orthogonal} is necessary to derive proper natural parameters. Suppose the contrary case where $\alpha$ is a general set of inducing points. Using a similar derivation as Section~\ref{sec:parameters}, we show that 
\begin{align*}
\frac{1}{2}\Sigma^{-1} &=  \frac{1}{2} \left( I - \Psi_{\beta} \Kb_{\beta}^{-1} \Psi_{\beta}^\t \right) +  \frac{1}{2} \Psi_{\beta} \Sb^{-1} \Psi_{\beta}^\t \\
\Sigma^{-1} \mu  
&=\left( \left( I - \Psi_{\beta} \Kb_{\beta}^{-1} \Psi_{\beta}^\t \right) + \Psi_{\beta} \Sb^{-1} \Psi_{\beta}^\t \right) \Psi_{\alpha} \ab  \\
&= ( I - \Psi_{\beta} \Kb_{\beta}^{-1} \Psi_{\beta}^\t ) \Psi_{\alpha} \tilde\jb_\alpha 
+ \Psi_{\beta} \tilde\jb_\beta  
\end{align*}
where $\tilde\jb_\alpha = \ab_\alpha$ and $\tilde\jb_\beta = \Sb^{-1} \Kb_\beta \ab_\alpha $. Therefore, we might consider choosing $(\jb_\alpha, \jb_\beta, \frac{1}{2}\Sb^{-1})$ as a candidate for natural parameters. 
However the above choice of parametrization is actually coupled due to the condition that  $\jb_\alpha$ and $\jb_\beta$ have to satisfy, i.e.
\begin{align*}
\tilde\jb_\beta = \Sb^{-1} \Kb_\beta \tilde\jb_\alpha
\end{align*} 
Thus, they cannot satisfy the requirement of being natural parameters. This is mainly because $\mu$ is given in only $\alpha$ basis, whereas $\Sigma^{-1} \mu$ is given in both $\alpha$ and $\beta$ bases.

\subsection{Alternate Choices of Natural Parameters}

As discussed previously in Section~\ref{sec:parameters}, the choice of natural parameters is only unique up to affine transformation. While in this paper we propose to use the unique orthogonal version, other choices of parametrization are possible. For instance, here we consider the hybrid parametrization in~\citep[appendix]{cheng2017variational} and give an overview on finding its natural parameters. 

The hybrid parametrization use the following decoupled basis: 
\begin{align*}
\mu = \Psi_\gamma \ab_\gamma + \Psi_\beta \ab_\beta 
\qquad 
\Sigma = (I - \Psi_\beta \Kb^{-1}_\beta \Psi_\beta^\top) + 
\Psi_\beta \Kb^{-1}_{\beta} \Sb\Kb^{-1}_{\beta}\Psi_\beta ^\top
\end{align*}
To facilitate a clear comparison, here we remove the $ \Kb_\beta^{-1}$ in the original  form suggested by~\citet{cheng2017variational}, which uses $\mu = \Psi_\gamma \ab_\gamma + \Psi_\beta  \Kb_\beta^{-1} \ab_\beta $. Note in the experiments, their original form was used.

As the covariance part above is the same form as our orthogonally decoupled basis in~\eqref{eq:orthogonal RKHS}, here we only consider the mean part. Following a similar derivation, we can write 
\begin{align*}
\Sigma^{-1} \mu  
&=\left( \left( I - \Psi_{\beta} \Kb_{\beta}^{-1} \Psi_{\beta}^\t \right) + \Psi_{\beta} \Sb^{-1} \Psi_{\beta}^\t \right) (\Psi_{\gamma} \ab_\gamma + \Psi_{\beta} \ab_\beta)  \\
&=  \Psi_{\gamma} \ab_\gamma + \Psi_\beta (\Sb^{-1} - \Kb_\beta^{-1} ) \Kb_{\beta, \gamma} \ab_\gamma + \Psi_{\beta} \Sb^{-1}  \Kb_\beta \ab_\beta  \\
&=  (\Psi_{\gamma}   - \Psi_\beta \Kb_\beta^{-1} \Kb_{\beta, \gamma} ) \ab_\gamma    +
\Psi_{\beta} \Sb^{-1} ( \Kb_\beta \ab_\beta +   \Kb_{\beta, \gamma} \ab_\gamma ) \\
&=  (I  - \Psi_\beta \Kb_\beta^{-1} \Psi_{\beta}^\t ) \Psi_{\gamma}  \jb_\gamma   + \Psi_{\beta} \jb_\beta 
\end{align*}
That is, we can choose the natural parameters as 
\begin{align} \label{eq:alternate natural paramter example}
\jb_\gamma = \ab_\gamma, 
\qquad 
\jb_\beta = \Sb^{-1} ( \Kb_\beta \ab_\beta +   \Kb_{\beta, \gamma} \ab_\gamma ), 
\qquad 
\bm{\Theta} = \frac{1}{2} \Sb^{-1}
\end{align}
This set of natural parameters, unlike the one in the previous section, is proper, because $\beta$ included as a subset of $\alpha$. 

Comparing~\eqref{eq:alternate natural paramter example} with~\eqref{eq:natural parameters}, we can see that there is a coupling between $\ab_{\gamma}$ and $\jb_\beta$ in \eqref{eq:alternate natural paramter example}. This would lead to a more complicated update rule in computing the natural gradient. 
This coupling phenomenon also applies to other choice of parametrizations, excerpt for our orthogonally decoupled basis.

\subsection{Invariance of Natural Gradient Descent}

As discussed above, the choice of natural parameters for the mean part is not unique, but here we show they all lead to the same natural gradient descent update. Therefore, our orthogonal choice~\eqref{eq:orthogonal RKHS}, among all possible equivalent parameterizations, has the cleanest update rule. 

This equivalence between different parameterizations can be easily seen from that the KL divergence between Gaussians are quadratic in $\Sigma^{-1} \mu$. Therefore,  the natural gradient of  $\Sigma^{-1} \mu$ has the form as the proximal update below
\begin{align*}
\argmin_x \lr{\nabla_x f}{x} + \frac{1}{2} (x - y)^\t Q (x-y) =  y - Q^{-1} g
\end{align*}
for some function $f$, vector $y$ and positive-definite matrix $Q$.

To see the invariance of invertible linear transformations, suppose we reparametrize $x,y$ above as $x = A u + b$ and $y =  A v + b$,
for some invertible $A$ and $b$.
Then the update becomes 
\begin{align*}
&\argmin_u \lr{\nabla_z f}{u} + \frac{1}{2} (u - v)^\t A^\t Q A (u - v) \\
&\argmin_u \lr{ A^\t \nabla_x f}{ u} + \frac{1}{2} (u - v)^\t A^\t Q A (u - v)\\
&=  v - A^{-1} Q^{-1}  \nabla_x f
\end{align*}
which represents the same update step in $x$ because
\begin{align*}
A( v - A^{-1} Q^{-1}  \nabla_x f) + b =  y - Q^{-1}  \nabla_x f.
\end{align*}

\subsection{Transformation of Natural Parameters and Expectation Parameters}

Here we provide a more rigorous proof of identifying natural and expectation parameters of decoupled bases, as the density function $p(f)$, which is used to illustrate the idea in Section~\ref{sec:parameters}, is not defined for GPs.
Here we show the transformation of natural parameters and expectation parameters based on KL divergence. We start from  $d$-dimensional exponential families and then show that the formulation extends to arbitrary $d$ .

Consider a $d$-dimensional exponential family. Its KL divergence of an exponential family can be written as 
\begin{align} \label{eq:KL in primal and dual}
\KL{q}{p} = A(\eta_p) + A^*(\theta_q) - \lr{\eta_p}{\theta_q}
\end{align}
where $A$ is the log-partition function, $A^*$ is its Legendre dual of $A$, $\theta$ is the expectation parameter, and $\eta$ is the natural parameter. It holds the duality property that $\theta_p = \nabla A(\eta_p)$ and  $\eta_p = \nabla A^*(\theta_p)$.

As~\eqref{eq:KL in primal and dual} is expressed in terms of inner product, it holds for arbitrary $d$ and it is defined finitely for GPs with decoupled basis~\citep{cheng2017variational}. 
Therefore, here we show that when we parametrize problem by $\eta_p = H \tilde{\eta}_p + b$, $\tilde{\eta}_p$ is also a candidate natural parameter satisfying ~\eqref{eq:KL in primal and dual} for some transformed expectation parameter $\tilde{\theta}_q$. It can be shown as below
\begin{align*}
\KL{q}{p} 
&=  A(\eta_p) + A^*(\theta_q) - \lr{\eta_p}{\theta_q}\\
&= A(H \tilde{\eta}_p + b) + A^*(\theta_q) - \lr{ H \tilde{\eta}_p + b}{\theta_q} \\
&= A( H \tilde{\eta}_p + b ) + A^*(\theta_q) - \lr{ \tilde{\eta}_p}{  H^\t  \theta_q} - \lr{b}{\theta_q} \\
&= A( H\tilde{\eta}_p + b ) + \left(A^*(H^{-\t} \tilde{\theta}_q) - \lr{b}{H^{-\t} \tilde{\theta}_q}\right) -  \lr{ \tilde{\eta}_p}{  \tilde{\theta}_q}   \\
&\eqqcolon \tilde{A}(\tilde{\eta}_p) + \tilde{A}^*(\tilde{\theta}_q)
-  \lr{ \tilde{\eta}_p}{  \tilde{\theta}_h} 
\end{align*}
where we define
\begin{align*}
\tilde{\theta}_q &= H^\t \theta_q \\
\tilde{A}(\tilde{\eta}_p) &=  A( H\tilde{\eta}_p + b ) \\
\tilde{A}^*(\tilde{\theta}_q) &= A^*(H^{-\t} \tilde{\theta}_q) - \lr{b}{H^{-\t} \tilde{\theta}_q} 
\end{align*}
It can be verified that $\tilde{A}^*$ is indeed the Legendre dual of $\tilde{A}$. 
\begin{align*}
\max_x \lr{w}{x} - \tilde{A}(x)  
&= \max_x \lr{w}{x} - A(Hx + b)\\
&= \max_z \lr{w}{H^{-1}(z-b)} - A(z) \\
&= -\lr{H^{-\t} w}{b} + \max_z \lr{H^{-\t} w}{z} - A(z)\\
&= -\lr{H^{-\t} w}{b} + A^*(H^{-\t} w ) = \tilde{A}^*(w) 
\end{align*}
Note the inversion requirement on $H$ can be removed  by replacing $-\t$ with pseudo-inverse, because $ \tilde{\theta}_q $ lies in the range of $H^\t$. 
Thus, if  $\eta = H \tilde{\eta} + b$ and $\theta$ are one choice of natural-expectation parameter pair, then $\tilde\eta$ and $\tilde\theta = H^T \eta$ is another natural-expectation parameter pair.

\section{Primal Representation of Orthogonally Decoupled GPs}
\label{app:inducing_point_construction}
In this section, we demonstrate that the orthogonally decoupled GPs have an equivalent construction from the primal viewpoint adopted by variational inference framework of~\citet{titsias2009variational}. 
The key idea is to use two sets of inducing points. We use them to form a posterior process by conditioning the prior like the usual way, but in the meantime imposing a particular restriction on the variational distribution at the inducing points. %

\subsection{The Variational Posterior Process Proposed by~\citet{titsias2009variational}}
The approach of~\citet{titsias2009variational} begins with expressing the prior process in terms of the following factorization\footnote{We follow the conventional abuse of notation by writing the process as if it has a density. See~\citet{matthews2017thesis} for a rigorous treatment that defines the posterior processes as in terms of Radon-Nikodym derivative with respect to the prior.}:
\begin{align*}
p(f) =& p(f | \mbf{f}_\beta)p(\mbf{f}_\beta)\,,
\end{align*}
where $\mbf{f}_\beta$ are function values at locations $\beta$, often referred to as ``inducing points.'' For simplicity we assume zero prior mean, so the prior at the inducing points is $p(\mbf{f}_\beta) = \mathcal{N}(\mbf{f}_\beta|\mbf{0}, \mbf{K}_\beta)$ and the prior conditional process $p(f | \mbf{f}_\beta)$ is a GP which we denote  as $\mathcal{GP}(m_{\mbf{f}_\beta}, s_{\mbf{f}_\beta})$ with
\begin{align*}
m_{\mbf{f}_\beta}(x) =& \mbf{k}_{x,\beta}^{\top} \mbf{K}_{\beta}^{-1}\mbf{f}_{\beta}\\
s_{\mbf{f}_\beta}(x, x') =& k(x, x') -  \mbf{k}_{x, \beta}^{\top} \mbf{K}_{\beta}^{-1} \mbf{k}_{\beta, x'}
\end{align*}

The key idea of~\citet{titsias2009variational}, which is later developed by~\citep{hensman2013gaussian, matthews2017thesis}, is to define the variational posterior process as 
\begin{align}\label{eq:titsias_assumption}
q(f) = p(f | \mbf{f}_\beta)q(\mbf{f}_\beta)\,,
\end{align}
where $q(\mbf{f}_\beta)=\mathcal{N}(\mbf{f}_\beta|\mbf{m}_\beta, \mbf{S}_\beta)$ for some variational parameters $\mbf{m}_\beta$ and $\mbf{S}_\beta$.  %
Since the conditional process is linear in $\mbf{f}_\beta$ and $q(\mbf{f}_\beta)$ is Gaussian, we can use standard properties for Gaussians (i.e., $\int_x \mathcal{N}(y | a+Lx, A)\mathcal{N}(x |b, B) dx\propto \mathcal{N}(y | a+Lb, A+L B L^\t)$) to derive the mean and covariance functions of the variational posterior process $q(f)$ in~\eqref{eq:titsias_assumption}:
\begin{align}
m(x) =& \mbf{k}_{x, \beta}\mbf{K}_{\beta}^{-1}\mbf{m}_{\beta}\\
s(x, x') =& k(x, x') +  \mbf{k}_{x, \beta}\mbf{K}_{\beta}^{-1} (\mbf{S}_\beta-\mbf{K}_{\beta})\mbf{K}_{\beta}^{-1}\mbf{k}_{\beta, x'}
\end{align}

\subsection{The Equivalent Posterior Process of the Orthogonally Decoupled Basis}
To derive our orthogonally decoupled approach, we introduce further a set of disjoint inducing points denoted as $\gamma$. Let $\mbf{f}_\gamma$ be the function values at locations $\gamma$. The prior process can be expressed as
\begin{align} \label{eq:prior conditional distribution of gamma on beta}
p(f) =& p(f | \mbf{f}_\gamma, \mbf{f}_\beta)p(\mbf{f}_\gamma | \mbf{f}_\beta) p(\mbf{f_\beta}),
\end{align}
where $p(\mbf{f}_\beta)$ is defined as before, the prior conditional distribution of $\mbf{f}_\gamma $ given $\mbf{f}_\beta$ can be written as 
\begin{align*}
p(\mbf{f}_\gamma | \mbf{f}_\beta) = \mathcal{N}(
\mbf{K}_{\gamma, \beta}\mbf{K}_{\beta}^{-1}\mbf{f}_{\beta}, 
\quad 
\mbf{K}_\gamma -  \mbf{K}_{\gamma,\beta}\mbf{K}_{\beta}^{-1} \mbf{K}_{\beta,\gamma}
),
\end{align*}
and $p(f | \mbf{f}_\gamma, \mbf{f}_\beta)$ the prior conditional process conditioned on $\mbf{f}_\gamma$ and $\mbf{f}_\beta$ is a GP, which we denote as $\mathcal{GP}(m_{\mbf{f}_{\gamma}, \mbf{f}_{\beta}}, s_{\mbf{f}_{\gamma}, \mbf{f}_{\beta}})$ and has the following mean and covariance functions
\begin{align}
\label{eq:posterior_mean_decoupled}
m_{\mbf{f}_{\gamma}, \mbf{f}_{\beta}}(x) = & 
\begin{bmatrix}
\mbf{k}_{x, \gamma} &
\mbf{k}_{x, \beta}
\end{bmatrix} 
\begin{bmatrix}
\mbf{K}_{\gamma} & \mbf{K}_{\gamma,\beta} \\
\mbf{K}_{\beta,\gamma} & \mbf{K}_{\beta}
\end{bmatrix}^{-1}
\begin{bmatrix}
\mbf{f}_{\gamma} \\
\mbf{f}_{\beta}
\end{bmatrix}\\
\label{eq:posterior_cov_decoupled}
s_{\mbf{f}_{\gamma}, \mbf{f}_{\beta}}(x, x') = &
k(x, x') -
\begin{bmatrix}
\mbf{k}_{x, \gamma} &
\mbf{k}_{x, \beta} 
\end{bmatrix}
\begin{bmatrix}
\mbf{K}_{\gamma} & \mbf{K}_{\gamma,\beta} \\
\mbf{K}_{\beta,\gamma} & \mbf{K}_{\beta}
\end{bmatrix}^{-1}
\begin{bmatrix}
\mbf{k}_{\gamma, x} \\
\mbf{k}_{\beta, x} 
\end{bmatrix}
\end{align}

Following the same idea of~\citet{titsias2009variational}, we consider a variational posterior written as 
\begin{align}\label{eq:decoupled_inducing_construction}
q(f) =& p(f | \mbf{f}_\gamma, \mbf{f}_\beta)q(\mbf{f}_\gamma, \mbf{f_\beta}).
\end{align}

Now we show how to parameterize $q(\mbf{f}_\gamma, \mbf{f_\beta})$ so that~\eqref{eq:decoupled_inducing_construction} defines an orthogonally decoupled GP. 
Note that if we parameterized this distribution as a full-rank Gaussian with no further restriction, it would be equivalent to just absorbing $\gamma$ into $\beta$ and would incur the computational complexity that we seek to avoid. 

To obtain an orthogonally decoupled posterior, we use the form
\begin{align}\label{eq:factorised_q_two_sets}
q(\mbf{f}_\gamma, \mbf{f}_\beta ) = q(\mbf{f}_\gamma | \mbf{f}_\beta)q(\mbf{f}_\beta)\,,
\end{align}
where $q(\mbf{f_\beta})=\mathcal{N}(\mbf{m}_\beta, \mbf{S}_\beta)$, and we define
\begin{align}\label{eq:q_f_gamma_given_beta}
q(\mbf{f}_\gamma | \mbf{f}_\beta) = 
\mathcal{N}(
\mbf{m}_{\gamma \perp \beta}
 +  \mbf{K}_{\gamma,\beta}\mbf{K}_{\beta}^{-1} \mbf{f}_\beta , 
\quad
\mbf{K}_\gamma -  \mbf{K}_{\gamma,\beta}\mbf{K}_{\beta}^{-1} \mbf{K}_{\beta,\gamma})
\end{align}
for some variational parameter $\mbf{m}_{\gamma \perp \beta}$.
Note that $q(\mbf{f}_\gamma | \mbf{f}_\beta)$ is a Gaussian distribution that matches $p(\mbf{f}_\gamma | \mbf{f}_\beta)$ in~\eqref{eq:prior conditional distribution of gamma on beta} in covariance, but does \emph{not} match in the mean unless $\mbf{m}_{\gamma \perp \beta}=0$. If we were to set %
$q(\mbf{f}_\gamma | \mbf{f}_\beta)= p(\mbf{f}_\gamma | \mbf{f}_\beta)$ we would recover the standard result using $\beta$ alone. This is because we would have effectively absorbed $\mbf{f}_\gamma$ into the  prior conditional process. 

Since our choice for $q(\mbf{f}_\gamma | \mbf{f}_\beta)$ matches the prior in the covariance and has the same linear dependency on $\mbf{f}_\beta$, the posterior process of $q(f)$ in~\eqref{eq:decoupled_inducing_construction} has a covariance function as  \eqref{eq:posterior_cov_decoupled}. 
To find its mean function,
let us first write $q(\mbf{f_\gamma}, \mbf{f}_\beta)$ as a joint distribution:
\begin{align*}
q\left(
\begin{bmatrix}
\mbf{f}_{\gamma} \\
\mbf{f}_{\beta}
\end{bmatrix} 
\right)
=
\mathcal N \left(
\begin{bmatrix}
\mbf{m}_{\gamma \perp \beta} +  \mbf{K}_{\gamma,\beta}\mbf{K}_{\beta}^{-1} \mbf{m}_\beta \\
\mbf{m}_{\beta}
\end{bmatrix} 
, 
\begin{bmatrix}
\mbf{K}_{\gamma} + \mbf{K}_{\gamma,\beta}\mbf{K}_{\beta}^{-1} (\mbf{S}_{\beta} - \mbf{K}_{\beta} )\mbf{K}_{\beta}^{-1} \mbf{K}_{\beta,\gamma}& \mbf{K}_{\gamma,\beta}\mbf{K}_{\beta}^{-1} \mbf{S}_{\beta} \\
\mbf{S}_{\beta}\mbf{K}_{\beta}^{-1} \mbf{K}_{\beta,\gamma} & \mbf{S}_{\beta}
\end{bmatrix} 
\right)
\end{align*}
We then can derive the posterior process mean function as
\begin{align}
m(x) = 
\begin{bmatrix}
\mbf{k}_{x, \gamma} &
\mbf{k}_{x, \beta}
\end{bmatrix} 
\begin{bmatrix}
\mbf{K}_{\gamma} & \mbf{K}_{\gamma,\beta} \\
\mbf{K}_{\beta,\gamma} & \mbf{K}_{\beta}
\end{bmatrix}^{-1}
\begin{bmatrix}
\mbf{m}_{\gamma \perp \beta} + 
\mbf{K}_{\gamma,\beta}\mbf{K}_{\beta}^{-1} \mbf{m}_\beta  \\
\mbf{m}_{\beta}
\end{bmatrix}
\end{align}

To simplify the above expression, we write the inverse block matrix explicitly as
\begin{align}
\begin{bmatrix}
\mbf{K}_{\gamma} & \mbf{K}_{\gamma,\beta} \\
\mbf{K}_{\beta,\gamma} & \mbf{K}_{\beta}
\end{bmatrix}^{-1}
=
\begin{bmatrix}
\mbf{K}_{\gamma \perp \beta}^{-1} & -\mbf{K}_{\gamma \perp \beta}^{-1}\mbf{K}_{\gamma,\beta} \mbf{K}_{\beta}^{-1}\\
-\mbf{K}_{\beta}^{-1}\mbf{K}_{\beta,\gamma}\mbf{K}_{\gamma \perp \beta}^{-1}  & \mbf{K}_{\beta}^{-1} +\mbf{K}_{\beta}^{-1}\mbf{K}_{\beta,\gamma}\mbf{K}_{\gamma \perp \beta}^{-1} \mbf{K}_{\gamma,\beta}\mbf{K}_{\beta}^{-1}
\end{bmatrix}
\end{align}
where we define
\begin{align}
\mbf{K}_{\gamma \perp \beta} = \mbf{K}_{\gamma} - \mbf{K}_{\beta,\gamma} \mbf{K}_{\beta}^{-1} \mbf{K}_{\beta,\gamma}.
\end{align}
After canceling several terms, we arrive at the expression
\begin{align}
m(x) = 
\mbf{k}_{x, \gamma} \mbf{K}_{\gamma \perp \beta}^{-1} \mbf{m}_{\gamma \perp \beta}
-\mbf{k}_{x, \beta} \mbf{K}_{\beta,\gamma}\mbf{K}_{\gamma \perp \beta}^{-1} \mbf{m}_{\gamma \perp \beta}
+
\mbf{K}_{x,\beta}\mbf{K}_{\beta}^{-1} \mbf{m}_\beta 
\end{align}

A natural choice is to define $\mbf{a}_\gamma=\mbf{K}_{\gamma \perp \beta}^{-1}\mbf{m}_{\gamma \perp \beta}$ (which agrees with the definition in Figure~\ref{fig:relationship}). In this case, we obtain 
\begin{align}
m(x) = 
(\mbf{k}_{x, \gamma} 
-\mbf{k}_{x, \beta} \mbf{K}_{\beta,\gamma}) \mbf{a_\gamma}
+
\mbf{K}_{x,\beta}\mbf{K}_{\beta}^{-1} \mbf{m}_\beta.
\end{align}
which is exactly the result for the orthogonally decoupled basis, as $\mbf{m}_\beta = \mbf{K}^{-1}_\beta \mbf{a}_\beta$.

The decoupled basis can therefore be interpreted from the inducing perspective as a special case of a structured covariance. The key idea above is to use prior conditional matching, just as in the approach of \citet{titsias2009variational}, but to match only the covariance and not the mean.

\section{Expression for the Optimal Variational Parameters in Decoupled Bases}
\label{sec:analytic_expressions}
In the case of the Gaussian likelihood we can solve the variational inference problem \ref{eq:VI problem} analytically, although doing so incurs a cost that scales cubically in $\alpha$ and prohibits the use of minibatches. 

To make the results mirror the familiar expression for the optimal variational parameters in the coupled case \citep{titsias2009variational}, we use the basis
\begin{align*}
\mu = \Psi_{\alpha} \ab,
\qquad 
\Sigma = I + \Psi_{\beta} \mbf{K}_{\beta}^{-1} ( \Sb - \mbf{K}_{\beta}) \mbf{K}_{\beta}^{-1}  \Psi_{\beta}^\t \,,
\end{align*}
This basis is equivalent to the \textsc{Hybrid} and \textsc{Decoupled} bases through redefinition of parameters. The solution for $\Sb$ is exactly the same as in the coupled case:
\begin{align*}
\Sb&=\left(\frac{1}{\sigma^2}\Kb_{\beta}^{-1}\Kb_{\beta X}\Kb_{X\beta }\Kb_{\beta}^{-1} +  \Kb_{\beta}^{-1}\right)^{-1}\\
&=\Kb_{\beta} \left(\frac{1}{\sigma^2}\Kb_{\beta X}\Kb_{X\beta } +  \Kb_{\beta}\right)^{-1} \Kb_{\beta}\,.
\end{align*}
For $\ab$, we have
\begin{align*}
\ab%
&=   \left(\frac{1}{\sigma^2}\Kb_{\alpha X}\Kb_{X\alpha } +  \Kb_{\alpha}\right)^{-1} \Kb_{\alpha X}\yb 
\end{align*}

\section{Experimental Details}
\label{sec:experimental details}

In our experiments we use sensible defaults and do not hand tune for specific datasets. The full details are as follows:

\paragraph{Kernel} We use the sum of a Matern52 kernel with lengthscale $0.1\sqrt{D}$  and an RBF kernel with lengthscale $\sqrt{D}$, where $D$ is the input dimension. Both kernels are intialized to unit amplitude for regression and amplitude 5 for classification. 

\paragraph{Inducing point initalizations}
We use kmeans to initialize $\beta$ and use a random sample of the data for $\gamma$. We take care to use the same random seeds to ensure consistency between methods. For the \Decoupled basis $\alpha$ we concatenate $\gamma$ and $\beta$ for a fair comparison with the other methods. 

\paragraph{Data preprocessing and splits} The datasets we used had already been preprocessed to have zero mean and unit standard deviation. We construct test sets with a random 10\% split. The splits are the same, so the results are directly comparable between our methods. The results from \citet{klambauer2017self} used a different split from ours, however.

\paragraph{Variational Parameter Initializations} We initialize the variational parameters to the prior. I.e. zero mean and $\Sb=\Kb_\beta$ (NB the $\Bb$ in the \textsc{Decoupled} basis is initialized to near zero). 

\paragraph{Optimization} We the adam optimizer with the default settings in the tensorflow implementation (including a learning rate of 0.001) for 20000 iterations. We use a step size of 0.005 for the natural gradient updates. For the non-conjugate likelihoods we increase from $10^{-5}$ to 0.005 linearly over the first 100 iterations, following the suggestion in \citet{salimbeni2018natural}.

\paragraph{Likelihood} We initialize the Gaussian likelihood variance to 0.1. 

\paragraph{Minibatches} We use a batch size of 1024 for data sub-sampling, and a batch of size 64 for the sub-sampling the columns of the ${\ab_\gamma}^{\top}\Kb_{\gamma}\ab_\gamma$ term in the ELBO.

We implemented all our methods in tensorflow, using on an open-source Gaussian process package, GPflow \cite{2017GPflow}. Our code \footnote{\href{https://github.com/hughsalimbeni/orth\_decoupled\_var\_gps}{\color{blue}\texttt{https://github.com/hughsalimbeni/orth\_decoupled\_var\_gps}}} and datasets \footnote{\href{https://github.com/hughsalimbeni/bayesian_benchmarks}{\color{blue}\texttt{https://github.com/hughsalimbeni/bayesian\_benchmarks}}} are publicly available.

\section{Further Results}
\label{sec:further_results}

\begin{figure*}[h!]
    \centering
    \begin{subfigure}{0.5\textwidth}
        \centering
        \includegraphics[width=\textwidth]{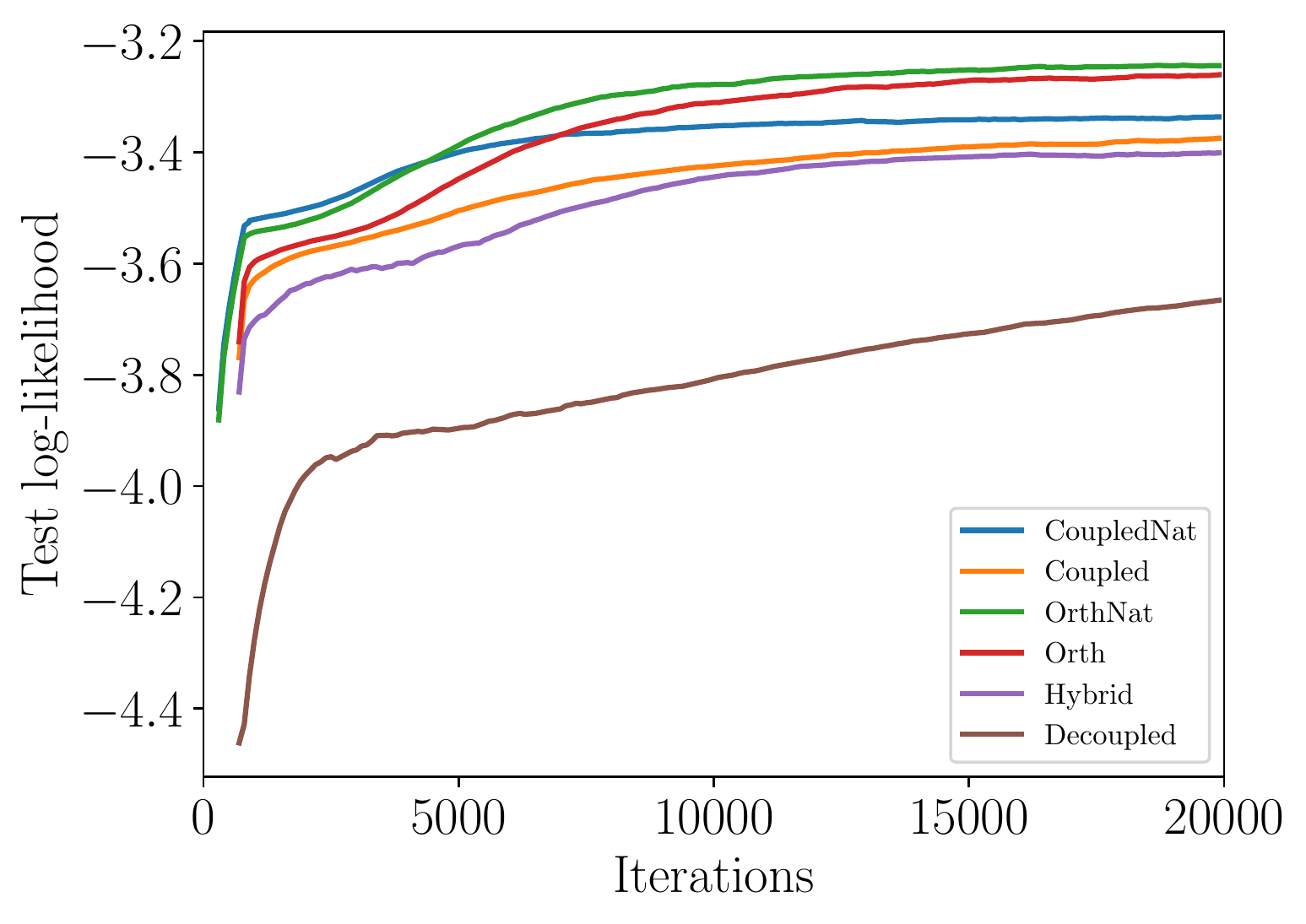}
        \caption{Test log-likelihood}
        \label{fig:large_scale_lik}
    \end{subfigure}%
    \begin{subfigure}{0.5\textwidth}
        \centering
        \includegraphics[width=\textwidth]{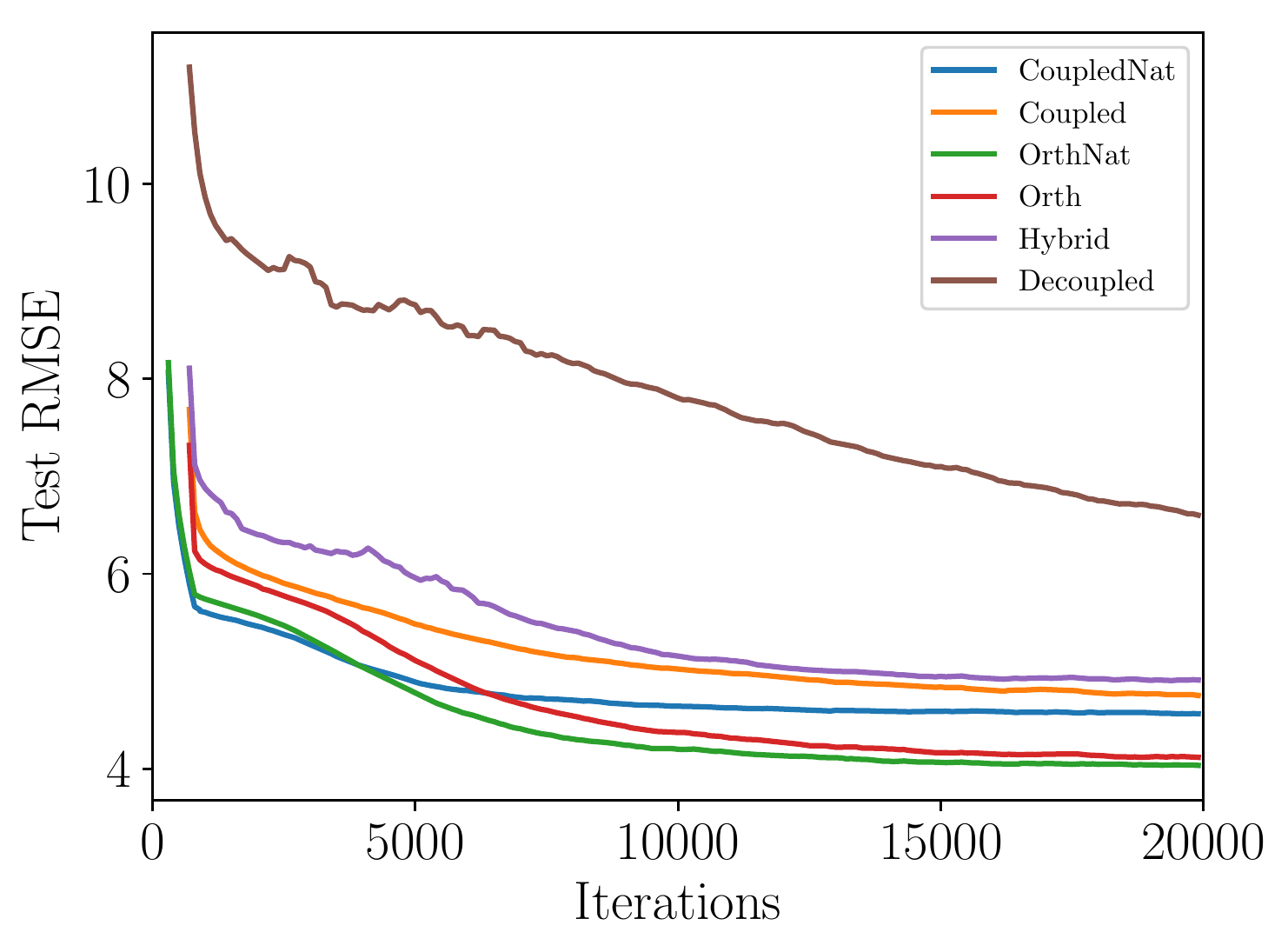}
        \caption{Test MAE}
    \end{subfigure}
    \caption{Test log-likelihood (a), and accuracy (b) for the large scale experiment. The ELBO is reported in the main text, Figure~\ref{fig:elbo}}
\label{fig:extra_plots}
\end{figure*}

\begin{table}
\resizebox{\columnwidth}{!}{%
\begin{tabular}{lllllllllll}
\toprule
                &        N &    D & \textsc{Coupled}$\dagger$ & \textsc{CoupledNat}$\dagger$ & \textsc{Coupled} & \textsc{CoupledNat} &  \textsc{OrthNat} & \textsc{Orth} & \textsc{Hybrid} & \textsc{Decoupled} \\
\midrule
         3droad &   434874 &    3 &                   -0.7630 &                      -0.7632 &          -0.7218 &             -0.7228 &  \textbf{-0.5947} &       -0.6103 &         -0.7617 &            -0.9438 \\
  houseelectric &  2049280 &   11 &                    1.3130 &                       1.3563 &           1.3383 &              1.3727 &   \textbf{1.3899} &        1.3719 &          1.3092 &             0.6032 \\
          slice &    53500 &  385 &                    0.7816 &                       0.7868 &           0.8321 &              0.8415 &   \textbf{0.8776} &        0.8701 &          0.7852 &             0.0655 \\
      elevators &    16599 &   18 &                   -0.4475 &                      -0.4455 &          -0.4448 &    \textbf{-0.4438} &           -0.4479 &       -0.4441 &         -0.4585 &            -0.4966 \\
           bike &    17379 &   17 &                    0.0059 &                       0.0135 &           0.0321 &     \textbf{0.0419} &            0.0271 &        0.0317 &         -0.0318 &            -0.1783 \\
   keggdirected &    48827 &   20 &                    1.0134 &                       1.0158 &           1.0214 &              1.0223 &   \textbf{1.0224} &        1.0216 &          1.0102 &             0.8947 \\
            pol &    15000 &   26 &                    0.0726 &                       0.0821 &           0.1047 &              0.1132 &   \textbf{0.1586} &        0.1451 &          0.0784 &            -0.2502 \\
 keggundirected &    63608 &   27 &                    0.6984 &                       0.6999 &           0.6994 &     \textbf{0.7020} &            0.7007 &        0.6967 &          0.6878 &             0.6374 \\
        protein &    45730 &    9 &                   -0.9531 &                      -0.9535 &          -0.9375 &             -0.9361 &  \textbf{-0.9138} &       -0.9165 &         -0.9527 &            -1.0464 \\
           song &   515345 &   90 &                   -1.1902 &                      -1.1898 &          -1.1890 &             -1.1884 &  \textbf{-1.1880} &       -1.1882 &         -1.1909 &            -1.2266 \\
           buzz &   583250 &   77 &                   -0.0566 &                      -0.0551 &          -0.0512 &             -0.0490 &  \textbf{-0.0480} &       -0.0484 &         -0.0614 &            -0.2285 \\
         kin40k &    40000 &    8 &                    0.0561 &                       0.1580 &           0.2191 &     \textbf{0.2234} &            0.1931 &        0.1777 &          0.1531 &            -0.3877 \\
           Mean &          &      &                    0.0442 &                       0.0588 &           0.0752 &              0.0814 &   \textbf{0.0981} &        0.0923 &          0.0472 &            -0.2131 \\
         Median &          &      &                     0.031 &                        0.048 &            0.068 &               0.078 &    \textbf{0.093} &         0.088 &           0.023 &             -0.239 \\
       Avg Rank &          &      &                     2.917 &                        4.000 &            5.417 &               6.750 &    \textbf{7.083} &         6.250 &           2.583 &              1.000 \\
\bottomrule
\end{tabular}

}
\caption{Regression results normalized test likelihoods. High numbers are better. The coupled bases had $|\beta|=400$ ($|\beta|=300$ for the $\dagger$ bases), and the decoupled all had $\gamma=700$, $\beta=300$. We note that the orthogonal bases always outperform their coupled counterparts with the same $\beta$, but this does not hold for the \textsc{Decoupled} or \textsc{Hybrid} bases} 
\label{table:regression_lik_full}
\end{table}

\begin{table}
\label{table:regression_rmse_full}
\resizebox{\columnwidth}{!}{%
\begin{tabular}{lllllllllll}
\toprule
                &        N &    D & \textsc{Coupled}$\dagger$ & \textsc{CoupledNat}$\dagger$ & \textsc{Coupled} & \textsc{CoupledNat} & \textsc{OrthNat} &    \textsc{Orth} & \textsc{Hybrid} & \textsc{Decoupled} \\
\midrule
         3droad &   434874 &    3 &                    0.5166 &                       0.5163 &           0.4946 &              0.4951 &  \textbf{0.4332} &           0.4395 &          0.6179 &             0.5150 \\
  houseelectric &  2049280 &   11 &                    0.0639 &                       0.0611 &           0.0615 &              0.0595 &  \textbf{0.0583} &           0.0594 &          0.1286 &             0.0636 \\
          slice &    53500 &  385 &                    0.0840 &                       0.0848 &           0.0787 &              0.0779 &  \textbf{0.0730} &           0.0736 &          0.2112 &             0.0838 \\
      elevators &    16599 &   18 &                    0.3767 &                       0.3760 &           0.3756 &              0.3753 &           0.3770 &  \textbf{0.3752} &          0.3973 &             0.3812 \\
           bike &    17379 &   17 &                    0.2342 &                       0.2324 &           0.2283 &     \textbf{0.2261} &           0.2293 &           0.2282 &          0.2848 &             0.2438 \\
   keggdirected &    48827 &   20 &                    0.0883 &                       0.0878 &           0.0874 &     \textbf{0.0871} &           0.0871 &           0.0873 &          0.0980 &             0.0883 \\
            pol &    15000 &   26 &                    0.2073 &                       0.2059 &           0.2012 &              0.2000 &  \textbf{0.1906} &           0.1931 &          0.2867 &             0.2065 \\
 keggundirected &    63608 &   27 &                    0.1200 &                       0.1196 &           0.1197 &     \textbf{0.1191} &           0.1194 &           0.1202 &          0.1304 &             0.1223 \\
        protein &    45730 &    9 &                    0.6207 &                       0.6216 &           0.6118 &              0.6113 &  \textbf{0.5963} &           0.5972 &          0.6868 &             0.6209 \\
           song &   515345 &   90 &                    0.7954 &                       0.7952 &           0.7944 &              0.7939 &  \textbf{0.7936} &           0.7938 &          0.8275 &             0.7960 \\
           buzz &   583250 &   77 &                    0.2601 &                       0.2606 &           0.2586 &              0.2579 &  \textbf{0.2574} &           0.2576 &          0.3106 &             0.2617 \\
         kin40k &    40000 &    8 &                    0.2087 &                       0.1885 &           0.1768 &              0.1746 &  \textbf{0.1740} &           0.1776 &          0.3501 &             0.1887 \\
\midrule
Mean &          &      &                    0.2980 &                       0.2958 &           0.2907 &              0.2898 &  \textbf{0.2824} &           0.2836 &          0.3608 &             0.2977 \\
         Median &          &      &                     0.221 &                        0.219 &            0.215 &               0.213 &   \textbf{0.210} &            0.211 &           0.299 &              0.225 \\
       Avg Rank &          &      &                     6.083 &                        5.167 &            3.750 &               2.417 &   \textbf{1.833} &            2.500 &           8.000 &              6.250 \\
\bottomrule
\end{tabular}
}
\caption{As Table \ref{table:regression_lik_full} but reporting test RMSE. Lower numbers are better.}
\end{table}

\begin{table}
\resizebox{\columnwidth}{!}{%
\begin{tabular}{lllllllllll}
\toprule
                    &       N &   D &   K &             Selu & \textsc{Coupled} & \textsc{CoupledNat} &    \textsc{Orth} & \textsc{OrthNat} &  \textsc{Hybrid} & \textsc{Decoupled} \\
\midrule
              adult &   48842 &  15 &   2 &            84.76 &            85.85 &      \textbf{86.11} &            85.65 &   \textbf{86.15} &            85.73 &              84.37 \\
         chess-krvk &   28056 &   7 &  18 &   \textbf{88.05} &            67.38 &               60.23 &            67.76 &            60.70 &            59.34 &              53.56 \\
          connect-4 &   67557 &  43 &   2 &   \textbf{88.07} &            85.54 &               86.44 &            85.99 &            86.33 &            85.13 &              83.12 \\
             letter &   20000 &  17 &  26 &   \textbf{97.26} &            95.69 &               93.22 &            95.77 &            93.45 &            95.26 &              92.68 \\
              magic &   19020 &  11 &   2 &            86.92 &            89.24 &      \textbf{89.50} &            89.35 &   \textbf{89.42} &            89.19 &              88.33 \\
          miniboone &  130064 &  51 &   2 &            93.07 &            93.21 &      \textbf{93.60} &            93.49 &   \textbf{93.59} &            93.36 &              92.04 \\
           mushroom &    8124 &  22 &   2 &  \textbf{100.00} &  \textbf{100.00} &     \textbf{100.00} &  \textbf{100.00} &  \textbf{100.00} &  \textbf{100.00} &    \textbf{100.00} \\
            nursery &   12960 &   9 &   5 &   \textbf{99.78} &            97.30 &               97.30 &            97.30 &            97.30 &            97.30 &              97.29 \\
        page-blocks &    5473 &  11 &   5 &            95.83 &   \textbf{97.99} &               97.79 &            97.21 &            97.81 &            97.49 &              96.98 \\
          pendigits &   10992 &  17 &  10 &            97.06 &   \textbf{99.65} &      \textbf{99.64} &   \textbf{99.66} &   \textbf{99.64} &   \textbf{99.66} &     \textbf{99.62} \\
           ringnorm &    7400 &  21 &   2 &            97.51 &   \textbf{98.92} &               98.78 &            98.78 &            98.78 &   \textbf{98.86} &     \textbf{98.92} \\
    statlog-landsat &    6435 &  37 &   6 &            91.00 &            90.26 &      \textbf{91.45} &            91.28 &            91.08 &   \textbf{91.35} &              90.35 \\
    statlog-shuttle &   58000 &  10 &   7 &   \textbf{99.90} &   \textbf{99.87} &               99.74 &   \textbf{99.90} &   \textbf{99.81} &            99.79 &              99.80 \\
            thyroid &    7200 &  22 &   3 &            98.16 &            99.41 &      \textbf{99.56} &   \textbf{99.47} &            99.31 &   \textbf{99.52} &              99.13 \\
            twonorm &    7400 &  21 &   2 &   \textbf{98.05} &            97.67 &               97.65 &            97.65 &            97.72 &            97.69 &              97.72 \\
     wall-following &    5456 &  25 &   4 &            90.98 &            94.79 &               95.64 &            95.56 &   \textbf{95.76} &            93.07 &              91.48 \\
           waveform &    5000 &  22 &   3 &            84.80 &            85.80 &               86.54 &            86.13 &            86.21 &            86.53 &     \textbf{87.55} \\
     waveform-noise &    5000 &  41 &   3 &   \textbf{86.08} &            82.59 &               82.71 &            82.93 &            83.12 &            83.05 &              82.71 \\
 wine-quality-white &    4898 &  12 &   7 &   \textbf{63.73} &            57.14 &               58.61 &            57.05 &            59.56 &            56.58 &              55.71 \\
\midrule              Mean &         &     &     &             \textbf{91.6} &             90.4 &                90.2 &             90.6 &             90.3 &             89.9 &               89.0 \\
             Median &         &     &     &             93.1 &             94.8 &                93.6 &             \textbf{95.6} &             93.6 &             93.4 &               92.0 \\
           Avg Rank &         &     &     &             4.16 &             3.89 &                3.53 &             3.68 &             \textbf{3.42} &             3.89 &               5.42 \\
\bottomrule
\end{tabular}
}
\caption{Classification accuracy results, including the results from \citet{klambauer2017self}.} 
\label{table:classification_acc_full}
\end{table}

\begin{table}
\resizebox{\columnwidth}{!}{%
\begin{tabular}{llllllllll}
\toprule
                    &       N &   D &   K &  \textsc{Coupled} & \textsc{CoupledNat} &     \textsc{Orth} &  \textsc{OrthNat} &   \textsc{Hybrid} & \textsc{Decoupled} \\
\midrule
              adult &   48842 &  15 &   2 &           -0.3048 &    \textbf{-0.2970} &           -0.3045 &  \textbf{-0.2973} &           -0.3067 &            -0.3234 \\
         chess-krvk &   28056 &   7 &  18 &  \textbf{-2.1239} &             -3.2821 &           -2.1625 &           -3.2145 &           -3.0443 &            -3.5380 \\
          connect-4 &   67557 &  43 &   2 &           -0.3160 &    \textbf{-0.3009} &           -0.3086 &  \textbf{-0.3017} &           -0.3244 &            -0.3680 \\
             letter &   20000 &  17 &  26 &           -0.2316 &             -0.4892 &  \textbf{-0.2276} &           -0.4793 &           -0.2810 &            -0.4861 \\
              magic &   19020 &  11 &   2 &           -0.2666 &    \textbf{-0.2641} &           -0.2658 &  \textbf{-0.2646} &           -0.2697 &            -0.2863 \\
          miniboone &  130064 &  51 &   2 &           -0.1680 &    \textbf{-0.1584} &           -0.1618 &  \textbf{-0.1585} &           -0.1645 &            -0.1902 \\
           mushroom &    8124 &  22 &   2 &  \textbf{-0.0006} &    \textbf{-0.0007} &  \textbf{-0.0009} &  \textbf{-0.0008} &  \textbf{-0.0007} &   \textbf{-0.0007} \\
            nursery &   12960 &   9 &   5 &  \textbf{-0.2228} &    \textbf{-0.2233} &  \textbf{-0.2225} &  \textbf{-0.2229} &           -0.2236 &            -0.2241 \\
        page-blocks &    5473 &  11 &   5 &           -0.1301 &    \textbf{-0.0989} &           -0.1328 &           -0.1112 &           -0.1368 &            -0.1538 \\
          pendigits &   10992 &  17 &  10 &           -0.0251 &    \textbf{-0.0216} &  \textbf{-0.0209} &  \textbf{-0.0207} &  \textbf{-0.0216} &            -0.0241 \\
           ringnorm &    7400 &  21 &   2 &  \textbf{-0.0345} &             -0.0458 &           -0.0466 &           -0.0465 &           -0.0410 &            -0.0418 \\
    statlog-landsat &    6435 &  37 &   6 &           -0.4503 &             -0.4102 &           -0.3956 &           -0.3920 &  \textbf{-0.3771} &            -0.4938 \\
    statlog-shuttle &   58000 &  10 &   7 &  \textbf{-0.0047} &             -0.0199 &  \textbf{-0.0049} &           -0.0174 &           -0.0170 &            -0.0166 \\
            thyroid &    7200 &  22 &   3 &           -0.0257 &             -0.0133 &  \textbf{-0.0115} &           -0.0211 &           -0.0127 &            -0.0290 \\
            twonorm &    7400 &  21 &   2 &  \textbf{-0.0590} &    \textbf{-0.0588} &  \textbf{-0.0590} &  \textbf{-0.0595} &           -0.0607 &            -0.0620 \\
     wall-following &    5456 &  25 &   4 &           -0.2032 &             -0.1674 &  \textbf{-0.1514} &           -0.1537 &           -0.3191 &            -0.4102 \\
           waveform &    5000 &  22 &   3 &           -0.6207 &             -0.5572 &           -0.5640 &  \textbf{-0.5038} &           -0.5160 &            -0.5469 \\
     waveform-noise &    5000 &  41 &   3 &           -0.7650 &             -0.7416 &           -0.7096 &  \textbf{-0.6778} &           -0.6893 &            -0.7673 \\
 wine-quality-white &    4898 &  12 &   7 &           -2.8884 &             -2.5400 &           -2.5681 &  \textbf{-2.5069} &           -2.7557 &            -2.7921 \\
\midrule               Mean &         &     &     &           -0.4653 &             -0.5100 &           \textbf{-0.4378} &           -0.4974 &           -0.5033 &            -0.5660 \\
             Median &         &     &     &            -0.223 &              -0.223 &            \textbf{-0.222} &            -0.223 &            -0.270 &             -0.286 \\
           Avg Rank &         &     &     &             3.553 &               3.026 &             2.868 &             \textbf{2.737} &             3.605 &              5.211 \\
\bottomrule
\end{tabular}
}
\caption{As Table \ref{table:classification_acc_full} but reporting test log-likelihoods. The test log-likelihood results from \citep{klambauer2017self} were not reported}
\label{table:regression}
\end{table}

\end{document}